\documentclass{article} 
\usepackage{nfam2026_workshop}
\usepackage{times}

\usepackage{amsmath,amsfonts,bm}

\def\eqref#1{equation~\ref{#1}}

\def\1{\bm{1}}

\DeclareMathAlphabet{\mathsfit}{\encodingdefault}{\sfdefault}{m}{sl}
\SetMathAlphabet{\mathsfit}{bold}{\encodingdefault}{\sfdefault}{bx}{n}

\newcommand{\softmax}{\mathrm{softmax}}

\usepackage{hyperref}
\usepackage{url}
\usepackage{booktabs}       
\usepackage{amsmath,amssymb,amsthm}
\usepackage{mathtools}
\usepackage{graphicx}
\usepackage{subcaption}
\usepackage{xcolor}
\usepackage{multirow}

\newcommand{\ours}{\textsc{GHN}}
\newcommand{\lse}{\mathrm{lse}}

\newcommand{\ReLU}{\mathrm{ReLU}}
\newcommand{\tr}{\mathrm{tr}}
\newcommand{\diag}{\mathrm{diag}}

\newtheorem{theorem}{Theorem}[section]
\newtheorem{lemma}[theorem]{Lemma}
\newtheorem{proposition}[theorem]{Proposition}
\newtheorem{corollary}[theorem]{Corollary}
\theoremstyle{remark}
\newtheorem{remark}[theorem]{Remark}

\title{Graph Hopfield Networks: \\ Energy-Based Node Classification with Associative Memory}

\author{Abinav Rao$^{*}$ \And Alex Wa$^{*}$ \And Rishi Athavale$^{*}$ \\\\ \small $^{*}$Equal contribution}

\iclrfinalcopy
\begin{document}

\maketitle

\begin{abstract}
We introduce \emph{Graph Hopfield Networks} (\ours{}), whose energy function couples associative memory retrieval with graph Laplacian smoothing for node classification.
Gradient descent on this joint energy yields an iterative update interleaving Hopfield retrieval with Laplacian propagation.
Memory retrieval provides regime-dependent benefits: up to 2.0~pp on sparse citation networks and up to 5~pp additional robustness under feature masking; the iterative energy-descent architecture itself is a strong inductive bias, with all variants (including the memory-disabled NoMem ablation) outperforming standard baselines on Amazon co-purchase graphs.
Tuning $\lambda \leq 0$ enables graph sharpening for heterophilous benchmarks without architectural changes.
\end{abstract}

\section{Introduction}
\label{sec:intro}

Modern Hopfield networks~\citep{krotov2016dense, demircigil2017model, ramsauer2020hopfield} have renewed interest in associative memory as a computational primitive.
Their connection to Transformer attention~\citep{ramsauer2020hopfield} has spurred applications across vision, language, and scientific domains.
Yet despite growing theoretical understanding (including hierarchical memory organization~\citep{krotov2021hierarchical}, novel energy functions~\citep{hoover2025dense}, and capacity analyses~\citep{lucibello2024exponential}), their application to \emph{node classification} on graphs via an energy-based formulation remains, to our knowledge, underexplored. \citet{liang2022modern} apply modern Hopfield networks to graph-level tasks, but do not couple memory retrieval to graph structure in a joint energy.

Graph neural networks (GNNs) learn node representations by aggregating neighborhood information~\citep{kipf2017semi, velickovic2018graph, hamilton2017inductive}, but degrade under noisy or incomplete edges ~\citep{zugner2018adversarial}.
Associative memory offers an alternative, content-based signal: rather than relying on local structure, it retrieves relevant patterns from feature content.

We propose \emph{Graph Hopfield Networks} (\ours{}), which combine these two mechanisms in a single energy function:
\begin{equation}
\label{eq:energy}
E_{\mathrm{GH}}(\mathbf{X}) = \sum_{v \in \mathcal{V}} \left[ -\lse(\beta, \mathbf{M} \mathbf{x}_v) + \tfrac{1}{2}\|\mathbf{x}_v\|^2 \right] + \lambda \, \tr(\mathbf{X}^\top \mathbf{L} \mathbf{X}),
\end{equation}
where $\lse(\beta, \mathbf{z}) \coloneqq \beta^{-1}\log\sum_\mu \exp(\beta z_\mu)$ is the log-sum-exp operator.
The first term drives memory retrieval toward learned patterns $\mathbf{M}$ and the second encourages smoothness over the graph Laplacian $\mathbf{L}$.
Gradient descent on this energy yields an update rule interleaving Hopfield retrieval with Laplacian propagation (Section~\ref{sec:method}; derivation in Appendix~\ref{app:theory_diff}).
The iterative energy-descent architecture is itself a strong inductive bias: all \ours{} variants,including a memory-disabled ablation, outperform every standard baseline on Amazon co-purchase graphs (Section~\ref{sec:results}).
Memory acts as a substitute for missing structural signal: it adds up to 2.0~pp on sparse citation networks and up to 5~pp robustness under feature masking (Section~\ref{sec:robustness}), but is redundant on dense graphs where the Laplacian suffices.
Tuning $\lambda \leq 0$ enables graph sharpening for heterophilous benchmarks without architectural changes (Section~\ref{sec:heterophily}).

\section{Graph Hopfield Networks}
\label{sec:method}

\subsection{Energy Function and Update Rule}
\label{sec:energy}

Given a graph $\mathcal{G} = (\mathcal{V}, \mathcal{E})$ with node features $\mathbf{X} \in \mathbb{R}^{N \times d}$ and (symmetric normalized) graph Laplacian $\mathbf{L} = \mathbf{I} - \mathbf{D}^{-\frac1{2}}\mathbf{A}\mathbf{D}^{-\frac1{2}}$, we define the \emph{Graph Hopfield energy} as in Eq.~\ref{eq:energy}.
The two terms encode complementary objectives: the Hopfield term drives each node's representation toward relevant learned patterns, while the Laplacian term encourages neighboring nodes to have similar representations.

\begin{figure}[h]
\centering
\includegraphics[width=0.85\linewidth]{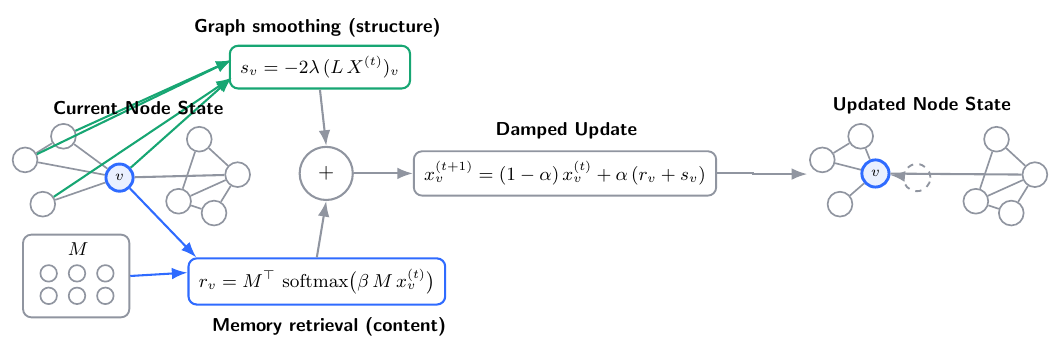}
\vspace{-4pt}
\caption{One iteration of the \ours{} update: node features $\mathbf{x}_v$ are blended with (i) memory retrieval from pattern bank $\mathbf{M}$, and (ii) graph Laplacian smoothing over neighbors. Damping $\alpha$ controls the step size.}
\label{fig:ghn_update}
\end{figure}

Taking the gradient with respect to $\mathbf{x}_v$ yields the update rule:
\begin{equation}
\label{eq:update}
\mathbf{x}_v^{(t+1)} = (1-\alpha)\,\mathbf{x}_v^{(t)} + \alpha \left[ \underbrace{\mathbf{M}^\top \softmax\!\left(\beta \mathbf{M} \mathbf{x}_v^{(t)}\right)}_{\text{memory retrieval}} - 2\lambda \underbrace{(\mathbf{L}\mathbf{X}^{(t)})_v}_{\text{graph smoothing}} \right]
\end{equation}
where $\alpha \in (0,1)$ is a damping coefficient.
This naturally interleaves two operations per iteration: memory retrieval from the memory bank $\mathbf{M}$, and structure-based smoothing via the graph Laplacian.

For least-squares retrieval (LSR; \citet{hoover2025dense}), the softmax is replaced by normalized Epanechnikov kernel weights: $w_\mu = \ReLU(1 - \frac{\beta}{2}\|\mathbf{x}_v - \mathbf{m}_\mu\|^2) / \sum_{\mu'} \ReLU(1 - \frac{\beta}{2}\|\mathbf{x}_v - \mathbf{m}_{\mu'}\|^2)$.
In our implementation, $\beta$ is treated as a learnable parameter, so retrieval sharpness is adapted during training.

Convergence of the base LSE formulation is analyzed in Appendix~\ref{app:theory}; in practice we observe stable convergence within $T{=}4$ iterations across all experiments.

\subsection{Gated Memory Retrieval}
\label{sec:gate}

Memory retrieval can produce poor outputs early in training or when queries fall far from any stored pattern; to prevent this from corrupting representations, we introduce a learned gate:
\begin{equation}
\label{eq:gate}
\mathbf{g}_v = \sigma\!\left(\mathbf{W}_g [\mathbf{x}_v \| \mathbf{r}_v] + \mathbf{b}_g\right), \qquad
\tilde{\mathbf{r}}_v = \mathbf{g}_v \odot \mathbf{r}_v + (1 - \mathbf{g}_v) \odot \mathbf{x}_v,
\end{equation}
where $\mathbf{r}_v$ is the raw retrieval output, $\sigma$ is the sigmoid, $\|$ denotes concatenation, and $\odot$ is element-wise multiplication.
We initialize $b_g = 2$ ($\mathbf{g}_v \approx 0.88$) to encourage memory use early in training; the gated output $\tilde{\mathbf{r}}_v$ replaces the retrieval term in Eq.~\ref{eq:update}.

\subsection{Hierarchical Memory}
\label{sec:hierarchical}

Flat attention over $K$ patterns can be unstable when $K$ is large relative to the feature dimension (Section~\ref{sec:results}).
Inspired by~\citet{krotov2021hierarchical}, we partition $K$ patterns into $G$ groups and use two-stage retrieval: (i) \textbf{routing}, soft-assign queries to groups via centroids computed from pattern parameters, then (ii) \textbf{retrieval}, standard Hopfield retrieval within each selected group. The full model is an encoder, $L$ stacked \ours{} layers (each with $T$ iterations and gating), and a linear classifier; retrieval uses tied keys/values.
Details in Appendix~\ref{app:details}.

\section{Experiments}
\label{sec:experiments}

\subsection{Setup}

\textbf{Datasets.}
We evaluate on nine benchmarks: three homophilous citation networks (Cora, CiteSeer, PubMed~\citep{sen2008collective}), two Amazon co-purchase graphs (Photo and Computers~\citep{shchur2018pitfalls}), and four heterophilous graphs (Texas, Wisconsin, Cornell~\citep{pei2020geom}, and Actor~\citep{tang2009social}).
Standard public splits are used for Planetoid; random 60/20/20 splits for Amazon; standard splits for the heterophilous datasets.

\textbf{Models.}
We compare \ours{} variants (\textbf{LSE} (flat, softmax retrieval), \textbf{LSR} (flat, Epanechnikov retrieval), \textbf{Hier($G$)} (hierarchical with $G$ groups), and \textbf{NoMem} (memory-disabled ablation, Laplacian smoothing only)) against GCN~\citep{kipf2017semi}, GAT~\citep{velickovic2018graph}, GraphSAGE~\citep{hamilton2017inductive}, APPNP~\citep{klicpera2019predict}, GIN~\citep{xu2019powerful}, MLP, and GPR-GNN~\citep{chien2021adaptive} (heterophily-specific; results from the original paper on the same evaluation protocol).

\textbf{Training.}
Per-model hyperparameter tuning on each dataset; each model selects its best config by validation accuracy.
All results are mean $\pm$ std over 10 seeds.
Full training protocol and hyperparameter grids are in Appendix~\ref{app:details}.

\subsection{Node Classification}
\label{sec:results}

\begin{table}[h]
\centering
\caption{Node classification accuracy (\%) with per-model HP tuning. \textbf{Bold}: best per column. $\dagger$: bimodal ($>$10\% std).}
\label{tab:accuracy}
\small
\begin{tabular}{@{}lccccc@{}}
\toprule
& Cora & CiteSeer & PubMed & Photo & Computers \\
\midrule
\multicolumn{6}{@{}l}{\textit{\ours{} variants}} \\
LSE         & $82.0 {\scriptstyle\pm 0.6}$ & $70.9 {\scriptstyle\pm 0.8}$ & $77.9 {\scriptstyle\pm 0.4}$ & $94.5 {\scriptstyle\pm 0.3}$ & $91.3 {\scriptstyle\pm 0.2}$ \\
LSR         & $82.0 {\scriptstyle\pm 0.6}$ & $70.9 {\scriptstyle\pm 1.0}$ & $78.1 {\scriptstyle\pm 0.4}$ & $94.4 {\scriptstyle\pm 0.2}$ & $91.3 {\scriptstyle\pm 0.4}$ \\
Hier(8)     & $82.3 {\scriptstyle\pm 0.4}$ & $70.3 {\scriptstyle\pm 1.1}$ & $77.6 {\scriptstyle\pm 0.8}$ & $94.2 {\scriptstyle\pm 0.2}$ & $91.3 {\scriptstyle\pm 0.2}$ \\
NoMem       & $80.3 {\scriptstyle\pm 0.4}$ & $69.8 {\scriptstyle\pm 1.1}$ & $77.2 {\scriptstyle\pm 0.6}$ & $\mathbf{94.7 {\scriptstyle\pm 0.2}}$ & $\mathbf{91.5 {\scriptstyle\pm 0.2}}$ \\
\midrule
\multicolumn{6}{@{}l}{\textit{Baselines}} \\
GCN         & $82.2 {\scriptstyle\pm 0.4}$ & $72.2 {\scriptstyle\pm 0.5}$ & $79.4 {\scriptstyle\pm 0.4}$ & $89.6 {\scriptstyle\pm 0.3}$ & $73.7 {\scriptstyle\pm 13.0}^\dagger$ \\
GAT         & $\mathbf{82.8 {\scriptstyle\pm 0.8}}$ & $72.3 {\scriptstyle\pm 0.6}$ & $78.2 {\scriptstyle\pm 0.4}$ & $92.2 {\scriptstyle\pm 0.3}$ & $87.9 {\scriptstyle\pm 0.6}$ \\
GraphSAGE   & $80.2 {\scriptstyle\pm 0.6}$ & $70.5 {\scriptstyle\pm 0.8}$ & $77.9 {\scriptstyle\pm 0.4}$ & $93.9 {\scriptstyle\pm 0.2}$ & $87.4 {\scriptstyle\pm 0.4}$ \\
APPNP       & $\mathbf{82.8 {\scriptstyle\pm 0.7}}$ & $\mathbf{72.4 {\scriptstyle\pm 0.5}}$ & $\mathbf{80.3 {\scriptstyle\pm 0.2}}$ & $92.2 {\scriptstyle\pm 0.3}$ & $41.9 {\scriptstyle\pm 13.1}^\dagger$ \\
GIN         & $77.5 {\scriptstyle\pm 1.2}$ & $63.8 {\scriptstyle\pm 2.3}$ & $77.6 {\scriptstyle\pm 0.7}$ & $93.0 {\scriptstyle\pm 0.5}$ & $88.9 {\scriptstyle\pm 0.5}$ \\
MLP         & $60.0 {\scriptstyle\pm 0.8}$ & $60.6 {\scriptstyle\pm 0.6}$ & $73.4 {\scriptstyle\pm 0.5}$ & $88.5 {\scriptstyle\pm 0.3}$ & $75.4 {\scriptstyle\pm 11.9}^\dagger$ \\
\bottomrule
\end{tabular}
\end{table}

\textbf{Iterative architecture is the primary driver on Amazon.}
Every \ours{} variant, including the memory-disabled NoMem ablation, outperforms the best standard baseline on both Amazon datasets (+0.8~pp on Photo, +2.6~pp on Computers).
GCN, APPNP, and MLP exhibit bimodal training collapse on Computers ($>$10\% std), a known failure mode that the iterative energy-descent architecture avoids entirely.
Memory-enabled variants are within 0.5~pp of NoMem on Amazon, suggesting that on dense graphs (Photo: 238k edges vs.\ Cora: 10.6k), Laplacian smoothing captures the relevant signal and memory provides marginal additional value.

\textbf{Memory retrieval adds value on sparse citation graphs.}
On all three Planetoid datasets, the best memory-enabled variant outperforms NoMem: Hier(8) on Cora (+2.0~pp, non-overlapping $\pm$1$\sigma$), LSR on CiteSeer (+1.1~pp), and LSR on PubMed (+0.9~pp), though the latter two gaps overlap in standard deviation.
\ours{} variants are competitive with but do not surpass GAT or APPNP on Planetoid.

\textbf{LSR is more stable than LSE at moderate $K$.}
A $\beta \times K$ phase diagram (Figure~\ref{fig:phase_diagram}) shows LSE requires $K{=}256$ to converge on Amazon Photo, while LSR is stable at $K{=}64$ for moderate $\beta$~\citep{hoover2025dense}.

\subsection{Robustness to Graph Corruption}
\label{sec:robustness}

We test three corruption types on Amazon Photo: edge removal, feature masking, and additive feature noise.

\begin{table}[h]
\centering
\caption{Accuracy (\%) under 50\% corruption on Amazon Photo. Clean baselines may differ from Table~\ref{tab:accuracy} due to a separate HP sweep optimized for robustness. \textbf{Bold}: best per column.}
\label{tab:amazon_corruption}
\small
\begin{tabular}{@{}lcccc@{}}
\toprule
& Clean & Edge drop & Feat.\ mask & Feat.\ noise \\
\midrule
\multicolumn{5}{@{}l}{\textit{\ours{} variants}} \\
LSE         & $\mathbf{94.4 {\scriptstyle\pm 0.3}}$ & $93.7 {\scriptstyle\pm 0.4}$ & $89.4 {\scriptstyle\pm 1.8}$ & $93.9 {\scriptstyle\pm 0.3}$ \\
LSR         & $94.3 {\scriptstyle\pm 0.2}$ & $93.7 {\scriptstyle\pm 0.5}$ & $91.0 {\scriptstyle\pm 2.4}$ & $94.1 {\scriptstyle\pm 0.2}$ \\
Hier(8)     & $94.2 {\scriptstyle\pm 0.2}$ & $93.7 {\scriptstyle\pm 0.5}$ & $\mathbf{91.9 {\scriptstyle\pm 1.1}}$ & $93.9 {\scriptstyle\pm 0.1}$ \\
NoMem       & $\mathbf{94.4 {\scriptstyle\pm 0.3}}$ & $\mathbf{94.0 {\scriptstyle\pm 0.5}}$ & $86.9 {\scriptstyle\pm 3.1}$ & $\mathbf{94.2 {\scriptstyle\pm 0.3}}$ \\
\midrule
\multicolumn{5}{@{}l}{\textit{Baselines}} \\
GCN         & $89.7 {\scriptstyle\pm 0.3}$ & $89.1 {\scriptstyle\pm 0.5}$ & $81.2 {\scriptstyle\pm 1.5}$ & $89.7 {\scriptstyle\pm 0.3}$ \\
GAT         & $92.2 {\scriptstyle\pm 0.3}$ & $91.2 {\scriptstyle\pm 0.5}$ & $81.4 {\scriptstyle\pm 5.0}$ & $92.2 {\scriptstyle\pm 0.2}$ \\
GraphSAGE   & $93.9 {\scriptstyle\pm 0.2}$ & $93.0 {\scriptstyle\pm 0.3}$ & $73.3 {\scriptstyle\pm 1.8}$ & $93.6 {\scriptstyle\pm 0.3}$ \\
APPNP       & $92.2 {\scriptstyle\pm 0.3}$ & $91.7 {\scriptstyle\pm 0.3}$ & $65.0 {\scriptstyle\pm 1.8}$ & $91.8 {\scriptstyle\pm 0.3}$ \\
GIN         & $92.6 {\scriptstyle\pm 0.4}$ & $89.6 {\scriptstyle\pm 0.8}$ & $91.1 {\scriptstyle\pm 0.9}$ & $92.6 {\scriptstyle\pm 0.5}$ \\
MLP         & $88.5 {\scriptstyle\pm 0.3}$ & $88.5 {\scriptstyle\pm 0.3}$ & $70.7 {\scriptstyle\pm 0.6}$ & $86.6 {\scriptstyle\pm 0.2}$ \\
\bottomrule
\end{tabular}
\end{table}

All memory-enabled \ours{} variants rank in the top four under every corruption type, while NoMem joins them except under feature masking, where GIN's structure-independent aggregation (91.1\%) outperforms NoMem (86.9\%) (Table~\ref{tab:amazon_corruption}).
Under edge removal and feature noise, NoMem performs comparably to memory-enabled variants, indicating that iterative Laplacian propagation is the primary robustness mechanism.
\textbf{Under feature masking, memory adds substantial value.}
Hier(8) retains 91.9\% at 50\% masking versus 86.9\% for NoMem, a 5.0~pp gap absent on clean data.
This is the clearest evidence that stored patterns compensate for degraded node features, a mechanism unavailable to structure-only smoothing.

On Planetoid, \ours{} robustness varies by dataset (LSE leads on PubMed; details in Appendix~\ref{app:edge_robustness}); Figure~\ref{fig:robustness} shows edge-deletion curves.

\begin{figure}[h]
\centering
\includegraphics[width=0.9\linewidth]{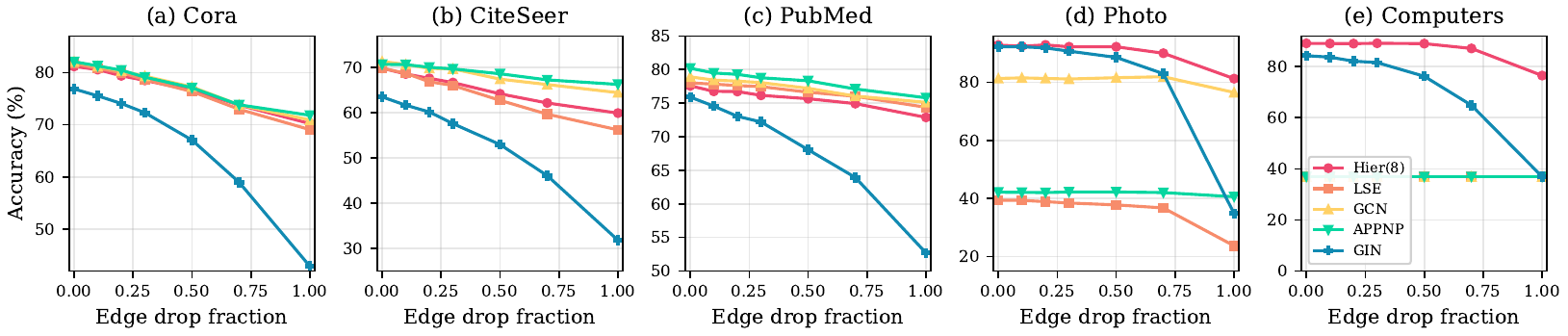}
\vspace{-4pt}
\caption{Accuracy under progressive edge deletion on Amazon and Planetoid datasets. \ours{} variants degrade more gradually on Amazon; Planetoid details in Appendix~\ref{app:edge_robustness}.}
\label{fig:robustness}
\end{figure}

\subsection{Heterophilous Graphs}
\label{sec:heterophily}

On heterophilous graphs, Laplacian smoothing ($\lambda>0$) pushes neighbors toward similar representations and can harm classification. We tune $\lambda \in \{-0.1, -0.05, -0.01, 0\}$ to enable graph sharpening (negative $\lambda$ pushes neighbors apart).

\begin{table}[h]
\centering
\caption{Node classification (\%) on heterophilous benchmarks. \textbf{Bold}: best per column. $\dagger$Published results~\citep{chien2021adaptive} on same splits; Actor not reported.}
\label{tab:heterophilous}
\small
\begin{tabular}{@{}lcccc@{}}
\toprule
& Texas & Wisconsin & Cornell & Actor \\
\midrule
\multicolumn{5}{@{}l}{\textit{\ours{} variants}} \\
LSE          & $\mathbf{81.4 {\scriptstyle\pm 1.5}}$ & $82.2 {\scriptstyle\pm 0.6}$ & $78.1 {\scriptstyle\pm 3.2}$ & $\mathbf{38.0 {\scriptstyle\pm 0.5}}$ \\
LSR          & $80.8 {\scriptstyle\pm 1.5}$ & $82.5 {\scriptstyle\pm 1.1}$ & $78.9 {\scriptstyle\pm 1.7}$ & $37.9 {\scriptstyle\pm 0.5}$ \\
Hier(8)      & $80.5 {\scriptstyle\pm 3.3}$ & $82.2 {\scriptstyle\pm 1.7}$ & $78.1 {\scriptstyle\pm 2.0}$ & $37.8 {\scriptstyle\pm 0.6}$ \\
NoMem        & $80.5 {\scriptstyle\pm 1.7}$ & $82.4 {\scriptstyle\pm 0.9}$ & $78.6 {\scriptstyle\pm 2.4}$ & $\mathbf{38.0 {\scriptstyle\pm 0.7}}$ \\
\midrule
\multicolumn{5}{@{}l}{\textit{Heterophily-aware baselines}} \\
GPR-GNN$^\dagger$ & $78.4 {\scriptstyle\pm 4.4}$ & $\mathbf{82.9 {\scriptstyle\pm 4.2}}$ & $\mathbf{80.3 {\scriptstyle\pm 8.1}}$ & -- \\
\midrule
\multicolumn{5}{@{}l}{\textit{Standard baselines}} \\
MLP          & $80.3 {\scriptstyle\pm 1.8}$ & $82.2 {\scriptstyle\pm 0.6}$ & $73.0 {\scriptstyle\pm 0.0}$ & $37.6 {\scriptstyle\pm 0.4}$ \\
GraphSAGE    & $78.4 {\scriptstyle\pm 1.8}$ & $80.0 {\scriptstyle\pm 0.8}$ & $76.2 {\scriptstyle\pm 1.7}$ & $37.0 {\scriptstyle\pm 0.5}$ \\
GCN          & $70.3 {\scriptstyle\pm 1.3}$ & $55.7 {\scriptstyle\pm 1.0}$ & $40.3 {\scriptstyle\pm 0.9}$ & $28.8 {\scriptstyle\pm 0.4}$ \\
GAT          & $68.6 {\scriptstyle\pm 2.3}$ & $58.0 {\scriptstyle\pm 1.9}$ & $43.8 {\scriptstyle\pm 4.0}$ & $29.4 {\scriptstyle\pm 0.6}$ \\
APPNP        & $70.0 {\scriptstyle\pm 1.5}$ & $56.7 {\scriptstyle\pm 1.1}$ & $43.2 {\scriptstyle\pm 3.6}$ & $32.4 {\scriptstyle\pm 0.6}$ \\
GIN          & $64.9 {\scriptstyle\pm 0.0}$ & $50.4 {\scriptstyle\pm 1.6}$ & $55.7 {\scriptstyle\pm 6.3}$ & $27.3 {\scriptstyle\pm 0.5}$ \\
\bottomrule
\end{tabular}
\end{table}

Table~\ref{tab:heterophilous} shows that with tuned negative $\lambda$, \ours{} variants perform competitively with GPR-GNN, a method specifically designed for heterophily, while substantially outperforming standard GNNs (GCN, GAT, APPNP), which all collapse under graph smoothing.
We report published GPR-GNN results on the same splits ($\dagger$); we did not re-tune or re-run GPR-GNN.
\ours{}-LSE leads on Texas (81.4\% vs.\ 78.4\%), while GPR-GNN leads on Wisconsin (82.9\% vs.\ 82.5\%) and Cornell (80.3\% vs.\ 78.9\%); all differences are within overlapping standard deviations.
Notably, GPR-GNN exhibits much higher variance (e.g., $\pm$8.1 on Cornell vs.\ $\pm$1.7 for \ours{}-LSR), suggesting that \ours{} is more stable across splits on these small datasets.
The key advantage is architectural: \ours{} handles both homophilous and heterophilous graphs by tuning a single parameter $\lambda$, whereas GPR-GNN requires learning $K$ polynomial filter coefficients specialized for the graph's spectral properties.

All best configs select $\lambda \in \{-0.05, -0.01\}$; Appendix~\ref{app:lambda_ablation} shows a sharp phase transition (e.g., LSE on Cornell: 73.5\% at $\lambda{=}0$ to 44.3\% at $\lambda{=}0.3$).
Memory-enabled and NoMem variants perform similarly at $\lambda \leq 0$, since the Laplacian term is minimal and all variants reduce to feature-space operations.

\section{Discussion and Future Directions}
\label{sec:conclusion}

The central finding of this work is a decomposition: on graphs, \emph{iterative energy descent matters more than what you descend on}.
NoMem, an energy-descent architectural ablation that removes the Hopfield term entirely, matches or exceeds memory-enabled variants on both Amazon datasets and ties on Actor, yet still outperforms every standard baseline on Amazon.
This suggests that the iterative damped-update architecture provides a strong inductive bias that stabilizes training on graphs where feedforward baselines collapse and enables graceful degradation under edge corruption.
For associative memory, the regime-dependent benefits are key: memory adds value on sparse graphs and under feature masking, while the ablation clarifies the respective roles of architecture vs.\ retrieval.

Memory's contribution is better understood as \emph{substitutive} rather than complementary to graph structure.
On dense graphs (Photo: 238k edges), the Laplacian already provides sufficient signal for classification, and memory is redundant.
On sparse graphs (Cora: 10.6k edges) or under feature corruption, the structural signal is insufficient and memory fills the gap, up to 2.0~pp on clean Cora and 5.0~pp under 50\% feature masking.
This substitution view explains why the gate learns a near-constant blending ratio (Appendix~\ref{app:gate_analysis}): the optimal balance between memory and structure is a property of the \emph{graph regime}, not of individual nodes, so a static gate suffices.

\textbf{Limitations.}
\ours{} does not outperform GAT/APPNP on clean Planetoid graphs; on heterophilous benchmarks, gains over MLP are modest and often within one standard deviation.
Flat LSE/LSR require $K{=}256$ to avoid collapse on Amazon; we evaluate only random (not adversarial) corruption; $O(NK)$ retrieval adds overhead (approximately $1.5$--$2{\times}$ wall-clock time per epoch relative to GAT, estimated across datasets).
The convergence theory (Appendix~\ref{app:theory}) requires $\beta\|\mathbf{M}\|^2 < 2$, a condition violated at trained operating points for most datasets (Table~\ref{tab:beta_norm}); in practice, stability is achieved empirically within $T{=}4$ iterations.
Promising extensions include per-node $\lambda$, replacing the gate with a fixed scalar blend, adversarial training of memory patterns, sparse retrieval for scaling, and comparison with graph transformers, the most direct architectural competitors.

\bibliographystyle{nfam2026_workshop}
\bibliography{references}


\newpage
\appendix

\section{Experimental Details}
\label{app:details}

Table~\ref{tab:hyperparams} gives the full hyperparameter grid.
The full model is encoder $\to$ $L$ Graph Hopfield layers (each with $T$ iterations and gating) $\to$ classifier; retrieval uses tied keys/values, skip connections (0.1), dropout, and LayerNorm.
Model-specific tuning axes: GAT heads $\in \{$4, 8$\}$, APPNP $\alpha \in \{$0.1, 0.2$\}$, \ours{}-LSE/LSR $K \in \{$64, 256$\}$.
\ours{} uses $\lambda{=}0.3$ on homophilous datasets and $\lambda \in \{-0.1, -0.05, -0.01, 0\}$ on heterophilous datasets.

\begin{table}[h]
\centering
\caption{Hyperparameters for \ours{} and training protocol. Per-model HP tuning (Table~\ref{tab:accuracy}) selects the best config per dataset from the grid ranges shown.}
\label{tab:hyperparams}
\small
\begin{tabular}{@{}ll@{}}
\toprule
\textbf{Parameter} & \textbf{Value / Grid} \\
\midrule
\multicolumn{2}{@{}l}{\textit{\ours{} (all variants)}} \\
Hidden dimension & $\{64, 128\}$ \\
Number of patterns $K$ & $\{64, 256\}$ (LSE/LSR) \\
Inverse temperature $\beta$ (init) & 1.0 (learnable) \\
Laplacian weight $\lambda$ & 0.3 \\
Damping $\alpha$ & 0.3 \\
Iterations per layer $T$ & 4 \\
Number of layers $L$ & 2 \\
Number of groups $G$ (Hier) & 8 \\
Memory heads $H$ & $\{1, 2, 4, 8\}$ (ablated; Appendix~\ref{app:heads_ablation}) \\
Dropout & $\{0.3, 0.5\}$ \\
Gate bias init & 2.0 \\
Keys/values & Tied \\
Skip weight & 0.1 \\
\midrule
\multicolumn{2}{@{}l}{\textit{Training (all models)}} \\
Learning rate & $\{0.001, 0.005, 0.01\}$ \\
Weight decay & $\{10^{-4}, 5{\times}10^{-4}, 10^{-3}\}$ \\
Optimizer & Adam \\
Epochs & 300 \\
Early stopping patience & 50 \\
\bottomrule
\end{tabular}
\end{table}

\textbf{Baseline collapse on Amazon Computers.}
The bimodal collapse of GCN, APPNP, and MLP on Amazon Computers (Table~\ref{tab:accuracy}) persists across our full hyperparameter grid: we swept learning rates $\{0.001, 0.005, 0.01\}$, weight decay $\{10^{-4}, 5{\times}10^{-4}, 10^{-3}\}$, and dropout $\{0.3, 0.5\}$, with no configuration eliminating the failure mode.
Collapse manifests as some seeds converging normally (${\sim}87\%$) while others diverge to near-chance (${\sim}50\%$), producing the $>$10\% standard deviations reported.
This instability is consistent with known sensitivity of spectral-normalization-based propagation (GCN, APPNP) on graphs with heterogeneous degree distributions~\citep{shchur2018pitfalls}; Amazon Computers has a heavy-tailed degree distribution (max degree $>$400) that amplifies initialization-dependent convergence.
We did not apply additional stabilization techniques (e.g., batch normalization, learning rate warmup) beyond those in standard implementations, since our goal was to compare architectures under a shared training protocol rather than to maximize each baseline individually.

\section{Related Work}
\label{app:related}

\textbf{Hopfield networks and associative memory.}
Modern (dense) Hopfield networks~\citep{krotov2016dense, demircigil2017model, ramsauer2020hopfield} store exponentially many patterns and connect directly to Transformer attention.
Extensions include hierarchical memory~\citep{krotov2021hierarchical}, the Energy Transformer~\citep{hoover2023energy}, and the Epanechnikov/LSR energy~\citep{hoover2025dense}.
\citet{liang2022modern} apply modern Hopfield retrieval to graph \emph{embedding} (graph-level classification), while we focus on \emph{node} classification with a joint energy that explicitly couples retrieval and graph smoothing.

\textbf{Graph transformers and content--structure coupling.}
Graph transformers such as Graphormer~\citep{ying2021graphormer} and subsequent scalable variants~\citep{rampasek2022recipe} combine content-based attention with structural biases.
Our goal is different: instead of replacing message passing with full transformer blocks, we formulate node updates as iterative descent steps on a joint energy, making the retrieval/smoothing tradeoff explicit and analyzable.

\textbf{Implicit and iterative graph models.}
Implicit/equilibrium GNNs solve fixed-point equations over graph representations~\citep{gu2020implicit}.
Our fixed-point view is related, but the operator is derived from a Hopfield-inspired retrieval energy plus Laplacian regularization, yielding explicit smoothness/contraction conditions for the base dynamics (Appendix~\ref{app:theory}).
Our iterative formulation also connects to analyses of depth-related degradation and expressivity loss in graph propagation~\citep{li2018deeper, oono2020graph}.

\textbf{Robust graph learning.}
GNN robustness is typically studied under adversarial structural perturbations~\citep{zugner2018adversarial}.
Our setting is complementary: we study random corruption to isolate the effect of structural reliability on model accuracy, without adversarial assumptions.
Our finding that aggregation design (sum vs.\ mean vs.\ teleportation) largely determines robustness connects to analyses of information flow bottlenecks in GNNs~\citep{topping2022understanding}.

\section{Additional Results}
\label{app:results}

\subsection{Edge Deletion on Planetoid}
\label{app:edge_robustness}

Table~\ref{tab:robustness} quantifies relative accuracy drop under edge deletion on the Planetoid datasets.
\ours{} robustness varies by dataset: LSE leads on PubMed (4.6\% drop), while on CiteSeer \ours{} variants are less robust (14--20\% drop) than GCN (10\%) and APPNP (6\%).
GIN is uniquely fragile under full edge removal (31--50\% loss) due to its injective sum aggregation~\citep{xu2019powerful}; GCN (mean) and APPNP (teleportation) degrade more gracefully (5--14\%).
GIN shows the opposite profile under feature masking: robust (91.1\% at 50\% masking) but fragile under edge drop, reflecting that injective aggregation exploits structure maximally while memory retrieval provides a structure-independent channel.

\begin{table}[h]
\centering
\caption{Relative accuracy drop (\%) under edge deletion on Planetoid. Smaller magnitude = greater robustness. \textbf{Bold}: most robust per dataset.}
\label{tab:robustness}
\small
\begin{tabular}{@{}lcccccc@{}}
\toprule
& \multicolumn{3}{c}{50\% edge drop} & \multicolumn{3}{c}{100\% edge drop} \\
\cmidrule(lr){2-4} \cmidrule(lr){5-7}
& Cora & CiteSeer & PubMed & Cora & CiteSeer & PubMed \\
\midrule
\ours{}-LSE      & $-6.5$ & $-10.2$  & $\mathbf{-1.8}$ & $-15.5$ & $-19.5$ & $\mathbf{-4.6}$ \\
\ours{}-Hier(8)  & $\mathbf{-5.6}$ & $-8.1$ & $-2.5$          & $-13.5$          & $-14.2$ & $-6.1$ \\
GCN              & $-5.9$ & $-5.5$ & $-2.2$   & $-13.6$          & $-9.7$ & $-4.8$ \\
APPNP            & $-6.1$ & $\mathbf{-3.0}$ & $-2.3$ & $\mathbf{-12.5}$ & $\mathbf{-6.3}$ & $-5.5$ \\
GIN              & $-12.7$ & $-16.5$ & $-10.3$           & $-44.1$          & $-49.9$ & $-30.5$ \\
\bottomrule
\end{tabular}
\end{table}

\textbf{Feature masking across datasets.}
Figure~\ref{fig:feature_mask_all} shows feature masking degradation across all five datasets.
Unlike edge deletion, all models (including \ours{}) degrade at comparable rates as features are masked on these datasets.
However, on the Amazon datasets with $K{=}256$ and tuned hyperparameters (Table~\ref{tab:amazon_corruption}), memory-enabled variants show substantially greater robustness to feature masking than NoMem (e.g., Hier: 91.9\% vs.\ NoMem: 86.9\% at 50\% masking on Photo), suggesting that higher pattern capacity provides redundant representations that compensate for missing features, a benefit not observed at $K{=}64$ on Planetoid.

\begin{figure}[h]
\centering
\includegraphics[width=\linewidth]{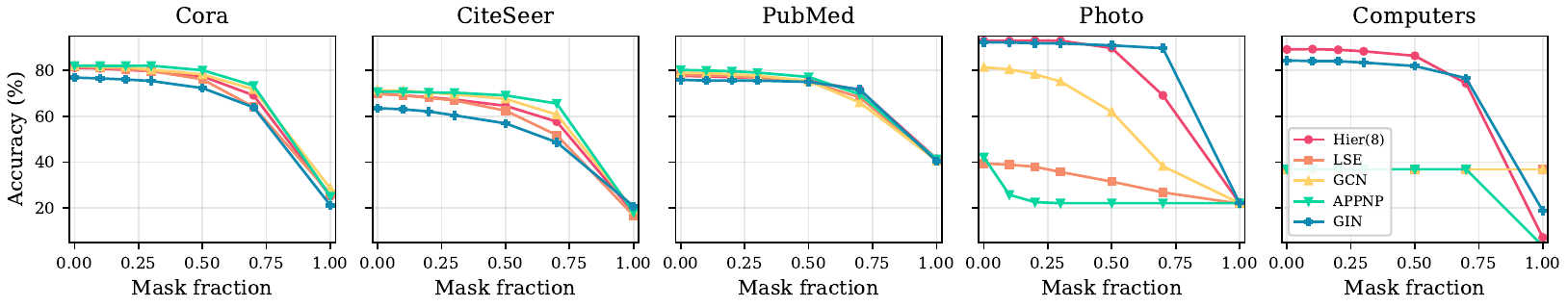}
\caption{Accuracy under feature masking across five datasets. Tuning selects $K{=}64$ on Planetoid and $K{=}256$ on Amazon. All models degrade at comparable rates on Planetoid; on Amazon, memory-enabled variants show substantially greater robustness (Table~\ref{tab:amazon_corruption}).}
\label{fig:feature_mask_all}
\end{figure}

\textbf{LSE vs LSR phase diagram.}
Figure~\ref{fig:phase_diagram} shows the $\beta \times K$ phase diagram on Amazon Photo.
LSE (left) is stable only at $K{=}256$; at $K{\leq}128$ it exhibits bimodal collapse where some seeds converge and others do not, regardless of $\beta$.
LSR (right) achieves stable ${\geq}93\%$ accuracy at $K{=}64$ for $\beta \in \{0.2, 0.5, 1.0, 5.0\}$, but is also bimodal at very low $\beta$ ($<0.2$) and at $\beta{=}2.0$.

\begin{figure}[h]
\centering
\includegraphics[width=\linewidth]{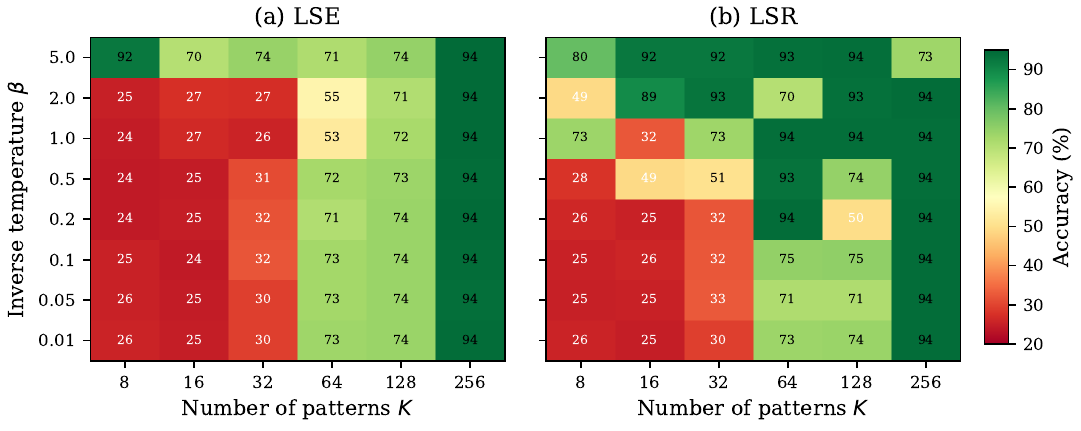}
\caption{Phase diagram of LSE vs.\ LSR energy on Amazon Photo. Each cell shows mean test accuracy over 10 seeds. High variance in some cells reflects bimodal behavior (seeds either converge to ${\sim}94\%$ or collapse). At $K{=}64$, LSR is stable for moderate $\beta$ while LSE is bimodal for all tested $\beta$.}
\label{fig:phase_diagram}
\end{figure}

\textbf{Memory contribution on Amazon.}
Under per-model hyperparameter tuning (Table~\ref{tab:accuracy}), NoMem slightly outperforms all memory-enabled variants on both Amazon datasets (94.7\% vs.\ 94.5\% on Photo; 91.5\% vs.\ 91.3\% on Computers).
This contrasts with the consistent memory benefit on Planetoid and suggests that the Amazon co-purchase graphs have richer edge structure (Photo: 238k edges vs.\ Cora: 10.6k edges) making Laplacian smoothing alone sufficient.
The \ours{} architecture's advantage on Amazon is its iterative Laplacian propagation, not memory retrieval per se; all \ours{} variants outperform the best standard baseline by 0.8--2.6~pp.

\subsection{Laplacian Weight $\lambda$ Ablation}
\label{app:lambda_ablation}

Table~\ref{tab:lambda_ablation} shows GHN-LSE accuracy across $\lambda \in \{0, 0.01, 0.05, 0.1, 0.2, 0.3, 0.5, 1.0\}$ on all seven datasets.
On homophilous datasets (Cora, CiteSeer, PubMed), accuracy increases monotonically from $\lambda{=}0$ to a peak at $\lambda{=}0.2$--$0.3$, then degrades mildly.
On heterophilous datasets (Texas, Wisconsin, Cornell, Actor), accuracy decreases monotonically with increasing $\lambda$, confirming that Laplacian smoothing is harmful when neighbors have different labels.

\begin{table}[h]
\centering
\caption{GHN-LSE test accuracy (\%) as a function of $\lambda$. \textbf{Bold}: best $\lambda$ per dataset.}
\label{tab:lambda_ablation}
\small
\begin{tabular}{@{}lcccccccc@{}}
\toprule
Dataset & $0$ & $0.01$ & $0.05$ & $0.1$ & $0.2$ & $0.3$ & $0.5$ & $1.0$ \\
\midrule
Cora      & 57.7 & 62.7 & 71.7 & 76.1 & 80.2 & \textbf{81.8} & 81.6 & 80.6 \\
CiteSeer  & 55.6 & 59.1 & 65.1 & 67.6 & \textbf{70.2} & 69.6 & 68.9 & 69.4 \\
PubMed    & 73.0 & 73.2 & 74.5 & 75.2 & \textbf{78.2} & 77.7 & 76.9 & 76.7 \\
\midrule
Texas     & \textbf{74.1} & 71.6 & 70.0 & 64.9 & 66.8 & 67.0 & 66.8 & 64.9 \\
Wisconsin & 79.8 & \textbf{80.8} & 72.2 & 67.1 & 60.0 & 55.1 & 45.3 & 44.9 \\
Cornell   & \textbf{73.5} & 72.4 & 63.8 & 46.2 & 41.1 & 44.3 & 42.2 & 41.4 \\
Actor     & \textbf{37.5} & 37.3 & 36.5 & 35.5 & 30.8 & 28.1 & 25.5 & 25.0 \\
\bottomrule
\end{tabular}
\end{table}

The results for other GHN variants (LSR, Hier, NoMem) follow the same pattern.
On Cora, GHN-LSR peaks at $\lambda{=}0.3$ (81.5\%) and GHN-Hier(8) at $\lambda{=}0.5$ (81.7\%).
On heterophilous datasets, all variants achieve their best performance at $\lambda{=}0$ or $\lambda{=}0.01$.

\subsection{Negative $\lambda$ (Graph Sharpening)}
\label{app:negative_lambda}

Setting $\lambda < 0$ reverses the Laplacian regularization: instead of pulling neighbors together, it pushes them apart (``graph sharpening'').
Table~\ref{tab:neg_lambda} shows that small negative values ($\lambda \in \{-0.05, -0.01\}$) can improve upon $\lambda{=}0$ on heterophilous datasets.

\begin{table}[h]
\centering
\caption{GHN accuracy (\%) with negative $\lambda$ on heterophilous datasets. Best result per dataset-model in \textbf{bold}.}
\label{tab:neg_lambda}
\small
\begin{tabular}{@{}llcccc@{}}
\toprule
& & $\lambda{=}{-}0.3$ & $\lambda{=}{-}0.1$ & $\lambda{=}{-}0.05$ & $\lambda{=}{-}0.01$ \\
\midrule
\multirow{2}{*}{Texas}
& LSE & 75.9 & 77.3 & \textbf{78.6} & 74.6 \\
& LSR & 75.9 & 77.3 & \textbf{79.5} & 78.6 \\
\midrule
\multirow{2}{*}{Wisconsin}
& LSE & 76.9 & 81.0 & \textbf{81.8} & 79.0 \\
& LSR & 78.8 & \textbf{80.8} & 80.4 & 79.0 \\
\midrule
\multirow{2}{*}{Cornell}
& LSE & 61.9 & 66.5 & 74.6 & \textbf{77.0} \\
& LSR & 63.2 & 69.5 & 74.3 & \textbf{77.8} \\
\midrule
\multirow{2}{*}{Actor}
& LSE & 34.7 & 37.5 & \textbf{38.0} & 37.5 \\
& LSR & 34.9 & 37.2 & 37.3 & \textbf{37.8} \\
\bottomrule
\end{tabular}
\end{table}

The optimal negative $\lambda$ is dataset-dependent: $-0.05$ works best on Texas and Wisconsin, while $-0.01$ is preferred on Cornell.
Gains over $\lambda{=}0$ are modest but consistent (0.5--3.5~pp).
Note that the accuracies here differ from Table~\ref{tab:heterophilous} because this ablation varies $\lambda$ with other hyperparameters (hidden dimension, dropout, $K$) held at a single default configuration, whereas Table~\ref{tab:heterophilous} reports per-model HP tuning across the full grid.
Very negative values ($\lambda{=}{-}0.3$) degrade performance, suggesting that excessive sharpening is as harmful as excessive smoothing.

\subsection{Iteration Count $T$ Ablation}
\label{app:t_ablation}

Table~\ref{tab:t_ablation} shows how the number of Hopfield update iterations $T$ per layer affects accuracy on one homophilous (Cora) and two heterophilous (Texas, Wisconsin) datasets, using $\lambda{=}0.3$.

\begin{table}[h]
\centering
\caption{GHN-LSE accuracy (\%) vs.\ iterations $T$ with $\lambda{=}0.3$. \textbf{Bold}: best $T$.}
\label{tab:t_ablation}
\small
\begin{tabular}{@{}lccccc@{}}
\toprule
& $T{=}1$ & $T{=}2$ & $T{=}4$ & $T{=}8$ & $T{=}16$ \\
\midrule
Cora      & $74.5 {\scriptstyle\pm 1.1}$ & $78.4 {\scriptstyle\pm 0.5}$ & $\mathbf{81.8 {\scriptstyle\pm 1.4}}$ & $56.1 {\scriptstyle\pm 26.1}$ & $30.9 {\scriptstyle\pm 2.0}$ \\
Texas     & $\mathbf{73.5 {\scriptstyle\pm 2.8}}$ & $72.2 {\scriptstyle\pm 2.9}$ & $67.0 {\scriptstyle\pm 1.7}$ & $65.9 {\scriptstyle\pm 1.4}$ & $65.4 {\scriptstyle\pm 1.1}$ \\
Wisconsin & $\mathbf{63.7 {\scriptstyle\pm 1.0}}$ & $58.6 {\scriptstyle\pm 2.3}$ & $55.1 {\scriptstyle\pm 5.9}$ & $44.1 {\scriptstyle\pm 1.9}$ & $46.3 {\scriptstyle\pm 1.4}$ \\
\bottomrule
\end{tabular}
\end{table}

On Cora, LSE peaks at $T{=}4$ and collapses at $T{\geq}8$ (the large standard deviation at $T{=}8$ indicates bimodal convergence).
On heterophilous datasets, $T{=}1$ is best; additional iterations with $\lambda{=}0.3$ apply progressively more harmful smoothing.
This is consistent with the $\lambda$ ablation results: on heterophilous graphs, minimizing the influence of the Laplacian term (whether by setting $\lambda{=}0$ or $T{=}1$) is essential.

\subsection{Memory Heads $H$ Ablation}
\label{app:heads_ablation}

Table~\ref{tab:heads_ablation} shows GHN-LSE accuracy with $H \in \{1,2,4,8\}$ memory heads on four datasets.

\begin{table}[h]
\centering
\caption{GHN-LSE accuracy (\%) vs.\ number of memory heads $H$.}
\label{tab:heads_ablation}
\small
\begin{tabular}{@{}lcccc@{}}
\toprule
& $H{=}1$ & $H{=}2$ & $H{=}4$ & $H{=}8$ \\
\midrule
Cora     & $81.5 {\scriptstyle\pm 0.9}$ & $81.6 {\scriptstyle\pm 1.0}$ & $81.8 {\scriptstyle\pm 1.4}$ & $80.3 {\scriptstyle\pm 0.4}$ \\
CiteSeer & $69.6 {\scriptstyle\pm 1.4}$ & $70.0 {\scriptstyle\pm 1.6}$ & $69.6 {\scriptstyle\pm 0.7}$ & $69.5 {\scriptstyle\pm 1.5}$ \\
PubMed   & $77.6 {\scriptstyle\pm 0.5}$ & $77.6 {\scriptstyle\pm 0.6}$ & $77.7 {\scriptstyle\pm 0.9}$ & $77.9 {\scriptstyle\pm 1.0}$ \\
Texas    & $64.9 {\scriptstyle\pm 1.8}$ & $65.9 {\scriptstyle\pm 2.9}$ & $67.0 {\scriptstyle\pm 1.7}$ & $67.6 {\scriptstyle\pm 0.0}$ \\
\bottomrule
\end{tabular}
\end{table}

The number of memory heads has minimal effect on accuracy: across all four datasets, the difference between the best and worst $H$ is at most 1.5~pp on Cora and 2.7~pp on Texas.
This supports using $H{=}1$ as the default for simplicity without sacrificing performance.

\subsection{Gate Analysis under Feature Corruption}
\label{app:gate_analysis}

We train GHN-LSE on Amazon Photo (300 epochs) and extract the learned gate value $g_v = \sigma(W_g[\mathbf{x}_v \| \hat{\mathbf{x}}_v] + b_g)$ under feature masking at levels 0\%, 10\%, 30\%, 50\%, and 70\%.
Table~\ref{tab:gate_analysis} reports the mean gate value (averaged over nodes and iterations) at each corruption level.

\begin{table}[h]
\centering
\caption{Mean gate value $\bar{g}$ of GHN-LSE on Amazon Photo under increasing feature masking. Higher $g$ means more weight on memory retrieval.}
\label{tab:gate_analysis}
\small
\begin{tabular}{@{}lccccc@{}}
\toprule
Mask \% & 0\% & 10\% & 30\% & 50\% & 70\% \\
\midrule
$\bar{g}$ & $0.727 {\scriptstyle\pm 0.039}$ & $0.727 {\scriptstyle\pm 0.039}$ & $0.726 {\scriptstyle\pm 0.038}$ & $0.725 {\scriptstyle\pm 0.038}$ & $0.723 {\scriptstyle\pm 0.037}$ \\
Accuracy (\%) & $93.6 {\scriptstyle\pm 0.5}$ & $93.6 {\scriptstyle\pm 0.5}$ & $93.2 {\scriptstyle\pm 0.5}$ & $90.6 {\scriptstyle\pm 1.3}$ & $67.9 {\scriptstyle\pm 12.8}$ \\
\bottomrule
\end{tabular}
\end{table}

The gate remains nearly constant ($\Delta\bar{g} \leq 0.004$) even as accuracy degrades substantially under heavy corruption.
This indicates the gate does not dynamically adapt its memory/feature blending ratio at inference time; instead, robustness arises because the stored prototype patterns provide useful class-level information even when node features are degraded.

\section{Theoretical Analysis for Base LSE Dynamics}
\label{app:theory}

The standard Hopfield energy is guaranteed to decrease under asynchronous updates of a single query~\citep{ramsauer2020hopfield}.
Our coupled scheme applies synchronous updates across all nodes with the Laplacian term introducing inter-node coupling, so this guarantee does not directly apply.
Below we provide formal results for the \emph{base} GHN energy with LSE retrieval and fixed $(\mathbf{M},\beta,\lambda,\mathbf{L})$ during iterative updates: explicit smoothness and contraction conditions for gradient-descent and fixed-point updates.
These results do not cover the full training-time system with gating, hierarchical routing, and jointly learned parameters; for the full model we rely on empirical stability (stable convergence within $T{=}4$ iterations in all experiments).
The theory assumes $\lambda \ge 0$; for $\lambda < 0$ (graph sharpening on heterophilous benchmarks), the Laplacian term becomes indefinite and our convexity/contraction guarantees no longer apply.
We keep $|\lambda|$ small ($\lambda \in \{-0.1, -0.05, -0.01\}$) and rely on damping, early stopping, and empirical stability in those experiments.
Table~\ref{tab:beta_norm} reports $\beta\|\mathbf{M}\|_\sigma^2$ (product of learned $\beta$ and spectral norm squared of $\mathbf{M}$) at trained operating points, averaged over 5 seeds.
The convexity condition $\beta\|\mathbf{M}\|^2 < 2$ is satisfied only for Cornell (0.75) and CiteSeer (1.88); all other datasets exceed the threshold, with Amazon Photo reaching 35.96.
This places the system well outside the strongly convex regime of Proposition~\ref{prop:convexity_base} in typical use; gradient descent nonetheless converges reliably within $T{=}4$ iterations in all experiments, suggesting tighter analysis (e.g., local convexity near attractors or Łojasiewicz-type arguments) may close this gap.

\begin{table}[h]
\centering
\caption{Learned $\beta\|\mathbf{M}\|_\sigma^2$ at trained operating points (mean over 5 seeds). The convexity condition requires this product to be ${<}\,2$.}
\label{tab:beta_norm}
\small
\begin{tabular}{@{}lccc@{}}
\toprule
Dataset & $\beta$ & $\|\mathbf{M}\|_\sigma^2$ & $\beta\|\mathbf{M}\|_\sigma^2$ \\
\midrule
Cora         & 1.11 & 1.94  & 2.15 \\
CiteSeer     & 1.09 & 1.71  & \textbf{1.88} \\
PubMed       & 1.36 & 5.24  & 6.54 \\
Amazon Photo & 0.99 & 36.37 & 35.96 \\
Amazon Comp. & 1.12 & 2.11  & 2.36 \\
Texas        & 1.25 & 3.88  & 4.65 \\
Wisconsin    & 1.05 & 2.48  & 2.60 \\
Cornell      & 1.07 & 0.68  & \textbf{0.75} \\
Actor        & 1.00 & 4.61  & 4.72 \\
\bottomrule
\end{tabular}
\end{table}

\subsection{Setup}
\label{app:theory_setup}

Let $\mathbf{X}\in\mathbb{R}^{N\times d}$ be the node representation matrix with rows $\mathbf{x}_v$, and let $\mathbf{M}\in\mathbb{R}^{K\times d}$ be the memory matrix.
For fixed $\beta>0$ and $\lambda\ge 0$, define
\begin{equation}
\label{eq:app_energy_base}
E_{\mathrm{base}}(\mathbf{X})
=
\sum_{v=1}^{N}
\left[
-\beta^{-1}\log\!\sum_{\mu=1}^{K}\exp\!\left(\beta\,\mathbf{m}_\mu^\top\mathbf{x}_v\right)
+\frac12\|\mathbf{x}_v\|^2
\right]
+\lambda\,\tr(\mathbf{X}^\top \mathbf{L}\mathbf{X}),
\end{equation}
where $\mathbf{L}=\mathbf{I}-\mathbf{D}^{-1/2}\mathbf{A}\mathbf{D}^{-1/2}$ is the symmetric normalized Laplacian.

\begin{proposition}[Normalized Laplacian quadratic form]
\label{prop:normalized_laplacian_identity}
If $\mathbf{A}$ is symmetric and $\mathbf{D}_{vv}=d_v>0$, then
\[
\tr(\mathbf{X}^\top \mathbf{L}\mathbf{X})
=
\frac12\sum_{u,v}A_{uv}\left\|
\frac{\mathbf{x}_u}{\sqrt{d_u}}
-\frac{\mathbf{x}_v}{\sqrt{d_v}}
\right\|^2
\ge 0.
\]
Hence $\mathbf{L}\succeq 0$ and the Laplacian term is convex.
\end{proposition}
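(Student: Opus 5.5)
The plan is to reduce the trace to a scalar quadratic form on each feature column and then expand the normalized Laplacian entrywise. Write $\mathbf{X} = [\mathbf{x}^{(1)},\dots,\mathbf{x}^{(d)}]$ column by column. Then $\tr(\mathbf{X}^\top \mathbf{L}\mathbf{X}) = \sum_{j=1}^d (\mathbf{x}^{(j)})^\top \mathbf{L}\,\mathbf{x}^{(j)}$. It therefore suffices to prove the scalar identity
\[
\mathbf{f}^\top \mathbf{L}\mathbf{f} = \tfrac12\sum_{u,v} A_{uv}\Bigl(\tfrac{f_u}{\sqrt{d_u}}-\tfrac{f_v}{\sqrt{d_v}}\Bigr)^2
\]
for $\mathbf{f}\in\mathbb{R}^N$. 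Summing this over columns $j$ recombines the squared coordinate differences into the squared Euclidean norms $\|\mathbf{x}_u/\sqrt{d_u}-\mathbf{x}_v/\sqrt{d_v}\|^2$ in the statement.

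For the scalar identity, substitute $\mathbf{g} = \mathbf{D}^{-1/2}\mathbf{f}$, which is well defined since every $d_v>0$. Then $\mathbf{f}^\top\mathbf{L}\mathbf{f} = \sum_v f_v^2 - \sum_{u,v} A_{uv} g_u g_v$. The one step that needs care is rewriting the diagonal term. Here I use $d_v=\sum_u A_{uv}$, the standard definition of degree, which I take as implicit in $\mathbf{D}$ being the degree matrix of $\mathbf{A}$. This gives
\[
\sum_v f_v^2 = \sum_v d_v g_v^2 = \tfrac12\sum_{u,v}A_{uv}(g_u^2+g_v^2),
\]
where the last equality uses the symmetry of $\mathbf{A}$ to split the sum into two halves, one indexed by $u$ and one by $v$. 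Subtracting the cross term then completes the square:
\[
\mathbf{f}^\top\mathbf{L}\mathbf{f} = \tfrac12\sum_{u,v}A_{uv}(g_u-g_v)^2,
\]
which is exactly the claimed form.

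Nonnegativity follows if the entries of $\mathbf{A}$ are nonnegative, as they are for an adjacency matrix. Each summand is then $A_{uv}$ times a square, so it is $\ge 0$. Because this holds for every $\mathbf{f}$, we get $\mathbf{L}\succeq 0$. Convexity of $\mathbf{X}\mapsto\lambda\tr(\mathbf{X}^\top\mathbf{L}\mathbf{X})$ for $\lambda\ge 0$ then follows because it is a positive semidefinite quadratic form in $\mathrm{vec}(\mathbf{X})$, with Hessian $2\lambda(\mathbf{I}_d\otimes\mathbf{L})\succeq 0$.

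I expect no real obstacle. The only point to state explicitly is the hypotheses used implicitly: $A_{uv}\ge 0$ and $d_v=\sum_u A_{uv}$. I will state these as standing assumptions on the graph.
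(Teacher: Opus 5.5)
Your proof is correct and follows essentially the same route as the paper: substitute $\mathbf{D}^{-1/2}\mathbf{X}$, isolate the cross term $\sum_{u,v}A_{uv}\,\mathbf{y}_u^\top\mathbf{y}_v$, and use $d_u=\sum_v A_{uv}$ together with the symmetry of $\mathbf{A}$ to complete the square. Working column by column instead of with row vectors is only cosmetic, and the hypotheses you state explicitly, $A_{uv}\ge 0$ and $d_v=\sum_u A_{uv}$, are the ones the paper's proof relies on without stating them.
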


\begin{proof}
Define $\mathbf{Y}:=\mathbf{D}^{-1/2}\mathbf{X}$, so $\mathbf{y}_v=\mathbf{x}_v/\sqrt{d_v}$.
Using $\mathbf{L}=\mathbf{I}-\mathbf{D}^{-1/2}\mathbf{A}\mathbf{D}^{-1/2}$,
\[
\tr(\mathbf{X}^\top\mathbf{L}\mathbf{X})
=
\tr(\mathbf{X}^\top\mathbf{X})-\tr(\mathbf{Y}^\top\mathbf{A}\mathbf{Y})
=
\sum_v\|\mathbf{x}_v\|^2-\sum_{u,v}A_{uv}\,\mathbf{y}_u^\top\mathbf{y}_v.
\]
Now expand
\begin{align}
\frac12\sum_{u,v}A_{uv}\|\mathbf{y}_u-\mathbf{y}_v\|^2
&=
\frac12\sum_{u,v}A_{uv}\left(\|\mathbf{y}_u\|^2+\|\mathbf{y}_v\|^2-2\mathbf{y}_u^\top\mathbf{y}_v\right) \\
&=
\sum_u\left(\sum_v A_{uv}\right)\|\mathbf{y}_u\|^2-\sum_{u,v}A_{uv}\mathbf{y}_u^\top\mathbf{y}_v \\
&=
\sum_u d_u\left\|\frac{\mathbf{x}_u}{\sqrt{d_u}}\right\|^2-\sum_{u,v}A_{uv}\frac{\mathbf{x}_u^\top\mathbf{x}_v}{\sqrt{d_ud_v}} \\
&=
\sum_u\|\mathbf{x}_u\|^2-\sum_{u,v}A_{uv}\frac{\mathbf{x}_u^\top\mathbf{x}_v}{\sqrt{d_ud_v}} \\
&=
\tr(\mathbf{X}^\top\mathbf{L}\mathbf{X}).
\end{align}
This proves the identity. The right-hand side is a sum of squared norms with nonnegative weights, so it is nonnegative.
\end{proof}

\subsection{Gradient and Hessian Structure}
\label{app:theory_diff}

\begin{proposition}[Gradient formula]
\label{prop:gradient_formula}
For each node $v$,
\begin{equation}
\label{eq:app_gradient_node}
\nabla_{\mathbf{x}_v}E_{\mathrm{base}}(\mathbf{X})
=
-\mathbf{M}^\top\softmax(\beta\,\mathbf{M}\mathbf{x}_v)
+\mathbf{x}_v
+2\lambda(\mathbf{L}\mathbf{X})_v.
\end{equation}
Therefore the damped update in Eq.~\ref{eq:update} is exactly gradient descent on $E_{\mathrm{base}}$ with step size $\alpha$.
\end{proposition}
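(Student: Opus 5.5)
The energy $E_{\mathrm{base}}$ splits into per-node terms plus one coupling term, so I will differentiate each piece separately with respect to a fixed row $\mathbf{x}_v$ and add the results.

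\textbf{Hopfield and norm terms.} Only the summand indexed by $v$ depends on $\mathbf{x}_v$. Write $\mathbf{z}=\mathbf{M}\mathbf{x}_v$, so $z_\mu=\mathbf{m}_\mu^\top\mathbf{x}_v$. Then
\[
\frac{\partial}{\partial z_\mu}\,\beta^{-1}\log\sum_{\mu'}\exp(\beta z_{\mu'})
=\frac{\exp(\beta z_\mu)}{\sum_{\mu'}\exp(\beta z_{\mu'})}
=\softmax(\beta\mathbf{z})_\mu .
\]
The factor $\beta$ from the chain rule cancels the prefactor $\beta^{-1}$. Applying the chain rule through the linear map $\mathbf{x}_v\mapsto\mathbf{M}\mathbf{x}_v$, whose Jacobian is $\mathbf{M}$, gives
\[
\nabla_{\mathbf{x}_v}\,\lse(\beta,\mathbf{M}\mathbf{x}_v)=\mathbf{M}^\top\softmax(\beta\mathbf{M}\mathbf{x}_v).
\]
This term enters the energy with a minus sign. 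The gradient of $\tfrac12\|\mathbf{x}_v\|^2$ is simply $\mathbf{x}_v$.

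\textbf{Laplacian term.} I will use the matrix identity $\nabla_{\mathbf{X}}\tr(\mathbf{X}^\top\mathbf{L}\mathbf{X})=(\mathbf{L}+\mathbf{L}^\top)\mathbf{X}$. Since $\mathbf{A}$ and $\mathbf{D}$ are symmetric, $\mathbf{L}$ is symmetric, so this equals $2\mathbf{L}\mathbf{X}$. Reading off row $v$ gives $2\lambda(\mathbf{L}\mathbf{X})_v$.

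To make this step self-contained, I would verify the identity directly. Expand
\[
\tr(\mathbf{X}^\top\mathbf{L}\mathbf{X})=\sum_{u,w}L_{uw}\,\mathbf{x}_u^\top\mathbf{x}_w
\]
and differentiate in $\mathbf{x}_v$. The result is $\sum_w L_{vw}\mathbf{x}_w+\sum_u L_{uv}\mathbf{x}_u$, which equals $2(\mathbf{L}\mathbf{X})_v$ by symmetry of $\mathbf{L}$. Alternatively, one could differentiate the edge-sum form from Proposition~\ref{prop:normalized_laplacian_identity}, but the trace expansion is shorter.

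\textbf{Assembling the gradient and the update.} Summing the three contributions gives Eq.~\ref{eq:app_gradient_node}. For the second claim, substitute into one gradient step $\mathbf{x}_v^{(t+1)}=\mathbf{x}_v^{(t)}-\alpha\nabla_{\mathbf{x}_v}E_{\mathrm{base}}(\mathbf{X}^{(t)})$. Grouping the two copies of $\mathbf{x}_v^{(t)}$ into $(1-\alpha)\mathbf{x}_v^{(t)}$ gives
\[
\mathbf{x}_v^{(t+1)}=(1-\alpha)\mathbf{x}_v^{(t)}+\alpha\Bigl[\mathbf{M}^\top\softmax(\beta\mathbf{M}\mathbf{x}_v^{(t)})-2\lambda(\mathbf{L}\mathbf{X}^{(t)})_v\Bigr],
\]
which is Eq.~\ref{eq:update}. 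The step is synchronous because all rows are evaluated at $\mathbf{X}^{(t)}$, so this is gradient descent on the joint $\mathbf{X}$.

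\textbf{Main obstacle.} The only care point is the factor of $2$ in the Laplacian term, which relies on $\mathbf{L}$ being symmetric (i.e. $\mathbf{A}$ symmetric, as assumed). The rest is routine differentiation. I would also remark that this statement concerns the ungated base update; the gated version in Eq.~\ref{eq:gate} is not an exact gradient step.
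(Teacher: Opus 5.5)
Your proposal is correct and follows the paper's proof essentially step for step. Both arguments use the chain rule through $\mathbf{z}=\mathbf{M}\mathbf{x}_v$ for the log-sum-exp term, the identity $\nabla_{\mathbf{X}}\tr(\mathbf{X}^\top\mathbf{L}\mathbf{X})=2\mathbf{L}\mathbf{X}$ via symmetry of $\mathbf{L}$, and substitution into a gradient step with step size $\alpha$. Your index-level check of the trace gradient and your remark that the gated update is not an exact gradient step are sound extras that do not change the argument.
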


\begin{proof}
Write
\[
E_{\mathrm{base}}(\mathbf{X})=
\sum_{u=1}^N e_u(\mathbf{x}_u)+\lambda\,\tr(\mathbf{X}^\top\mathbf{L}\mathbf{X}),
\quad
e_u(\mathbf{x})=-\beta^{-1}\log\sum_{\mu=1}^K e^{\beta\mathbf{m}_\mu^\top\mathbf{x}}+\frac12\|\mathbf{x}\|^2.
\]
Only $e_v(\mathbf{x}_v)$ and the Laplacian term depend on $\mathbf{x}_v$.
For retrieval, set $\mathbf{z}=\mathbf{M}\mathbf{x}_v$ and use
$\nabla_{\mathbf{z}}\left[\beta^{-1}\log\sum_\mu e^{\beta z_\mu}\right]=\softmax(\beta\mathbf{z})$.
Chain rule gives
$\nabla_{\mathbf{x}_v}\lse(\beta,\mathbf{M}\mathbf{x}_v)=\mathbf{M}^\top\softmax(\beta\mathbf{M}\mathbf{x}_v)$.
The quadratic term contributes $\mathbf{x}_v$.
For the Laplacian term,
\[
\nabla_{\mathbf{X}}\tr(\mathbf{X}^\top\mathbf{L}\mathbf{X})=(\mathbf{L}+\mathbf{L}^\top)\mathbf{X}=2\mathbf{L}\mathbf{X},
\]
since $\mathbf{L}$ is symmetric.
Combining terms yields Eq.~\ref{eq:app_gradient_node}.
Then
\[
\mathbf{x}_v^{(t+1)}
=
\mathbf{x}_v^{(t)}-\alpha\nabla_{\mathbf{x}_v}E_{\mathrm{base}}(\mathbf{X}^{(t)})
=
(1-\alpha)\mathbf{x}_v^{(t)}
+\alpha\!\left[\mathbf{M}^\top\softmax(\beta\mathbf{M}\mathbf{x}_v^{(t)})-2\lambda(\mathbf{L}\mathbf{X}^{(t)})_v\right].
\]
\end{proof}

\begin{lemma}[Retrieval Jacobian]
\label{lem:retrieval_jacobian}
Let $s(\mathbf{x})=\mathbf{M}^\top\softmax(\beta\mathbf{M}\mathbf{x})$ and $\mathbf{p}(\mathbf{x})=\softmax(\beta\mathbf{M}\mathbf{x})$.
Then
\[
\nabla s(\mathbf{x})=\beta\,\mathbf{M}^\top\Sigma(\mathbf{p}(\mathbf{x}))\mathbf{M},
\quad
\Sigma(\mathbf{p})=\diag(\mathbf{p})-\mathbf{p}\mathbf{p}^\top.
\]
\end{lemma}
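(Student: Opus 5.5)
The plan is a chain-rule computation that factors $s$ as a composition of three maps and multiplies their Jacobians. Write $s=\mathbf{M}^\top\circ\mathbf{p}$ with $\mathbf{p}=\softmax\circ(\beta\mathbf{M}\,\cdot)$, where $\mathbf{z}=\beta\mathbf{M}\mathbf{x}$ is linear in $\mathbf{x}$. The chain rule then gives
\[
\nabla s(\mathbf{x})=\mathbf{M}^\top\,\frac{\partial \softmax(\mathbf{z})}{\partial \mathbf{z}}\,\beta\mathbf{M}.
\]
The first factor holds because $\mathbf{M}^\top$ is a fixed linear map, and the last because $\partial\mathbf{z}/\partial\mathbf{x}=\beta\mathbf{M}$. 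Here $\nabla s$ is read as the $d\times d$ Jacobian with entries $\partial s_i/\partial x_j$.

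The only real computation is the softmax Jacobian, which I will show equals $\Sigma(\mathbf{p})$.
\begin{itemize}
\item Set $p_\mu=e^{z_\mu}/Z$ with $Z=\sum_\nu e^{z_\nu}$.
\item Differentiating gives $\partial p_\mu/\partial z_\nu = p_\mu\delta_{\mu\nu}-p_\mu p_\nu$.
\item Both terms follow from the quotient rule, using $\partial Z/\partial z_\nu=e^{z_\nu}$.
\item In matrix form this is $\diag(\mathbf{p})-\mathbf{p}\mathbf{p}^\top=\Sigma(\mathbf{p})$, evaluated at $\mathbf{p}=\mathbf{p}(\mathbf{x})$.
\end{itemize}

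An alternative route is to note that $\softmax(\mathbf{z})=\nabla_{\mathbf{z}}\log\sum_\mu e^{z_\mu}$. Its Jacobian is then the Hessian of log-sum-exp, which is symmetric. I would mention this as a cross-check, since it also explains why $\nabla s=\beta\mathbf{M}^\top\Sigma\mathbf{M}$ is symmetric. That symmetry is consistent with $s=\nabla_{\mathbf{x}}\lse(\beta,\mathbf{M}\mathbf{x})$ from Proposition~\ref{prop:gradient_formula}, because the Jacobian of a gradient is a Hessian.

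Substituting the softmax Jacobian into the chain-rule product gives $\beta\mathbf{M}^\top\Sigma(\mathbf{p}(\mathbf{x}))\mathbf{M}$, and the scalar $\beta$ can be pulled to the front. No step is a genuine obstacle. The only point needing care is the transposition convention: because $\Sigma$ is symmetric, the Jacobian and its transpose coincide, so the result is the same under either layout convention.
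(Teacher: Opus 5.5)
Your proposal is correct and follows essentially the same route as the paper: compute the softmax Jacobian componentwise as $p_\mu(\delta_{\mu\nu}-p_\nu)$, write it as $\diag(\mathbf{p})-\mathbf{p}\mathbf{p}^\top$, and apply the chain rule through $\mathbf{z}=\beta\mathbf{M}\mathbf{x}$ to get $\mathbf{M}^\top\Sigma(\mathbf{p})(\beta\mathbf{M})$. Your log-sum-exp Hessian cross-check and the remark on layout conventions are sound additions, but they are not needed for the proof.
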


\begin{proof}
For softmax $\mathbf{p}$, componentwise Jacobian is
\[
\frac{\partial p_i}{\partial z_j}=p_i(\delta_{ij}-p_j),
\]
so in matrix form $\nabla_{\mathbf{z}}\softmax(\mathbf{z})=\diag(\mathbf{p})-\mathbf{p}\mathbf{p}^\top$.
With $\mathbf{z}=\beta\mathbf{M}\mathbf{x}$,
\[
\nabla s(\mathbf{x})
=
\mathbf{M}^\top
\left(\diag(\mathbf{p})-\mathbf{p}\mathbf{p}^\top\right)
(\beta\mathbf{M})
=
\beta\,\mathbf{M}^\top\Sigma(\mathbf{p})\mathbf{M}.
\]
\end{proof}

\begin{lemma}[Softmax covariance bound]
\label{lem:softmax_cov}
For any probability vector $\mathbf{p}\in\Delta^{K-1}$, define
$\Sigma(\mathbf{p})=\diag(\mathbf{p})-\mathbf{p}\mathbf{p}^\top$.
Then $\Sigma(\mathbf{p})\succeq 0$ and $\|\Sigma(\mathbf{p})\|\le \frac12$.
\end{lemma}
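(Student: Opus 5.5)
We prove the two claims separately, working directly with the quadratic form of $\Sigma(\mathbf{p})$. Positive semidefiniteness comes from a variance interpretation; the spectral-norm bound then comes from a trace bound once semidefiniteness is known.

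\textbf{Positive semidefiniteness.} For any $\mathbf{u}\in\mathbb{R}^K$,
\[
\mathbf{u}^\top\Sigma(\mathbf{p})\mathbf{u}=\sum_\mu p_\mu u_\mu^2-\Big(\sum_\mu p_\mu u_\mu\Big)^2 .
\]
This is the variance of a random variable that takes the value $u_\mu$ with probability $p_\mu$. Equivalently, by Jensen's inequality or Cauchy--Schwarz,
\[
\Big(\sum_\mu p_\mu u_\mu\Big)^2\le \Big(\sum_\mu p_\mu\Big)\Big(\sum_\mu p_\mu u_\mu^2\Big)=\sum_\mu p_\mu u_\mu^2,
\]
using $\sum_\mu p_\mu = 1$. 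Hence $\Sigma(\mathbf{p})\succeq 0$.

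\textbf{Norm bound.} Since $\Sigma(\mathbf{p})$ is symmetric and PSD, its spectral norm equals its largest eigenvalue, which is at most its trace:
\[
\|\Sigma(\mathbf{p})\|\le \tr\Sigma(\mathbf{p})=\sum_\mu p_\mu-\sum_\mu p_\mu^2=1-\|\mathbf{p}\|^2 .
\]
This alone gives only $1-1/K$, which is too weak. The sharper route is to find a second eigenvalue to subtract, which we get from the constraint $\sum_\mu p_\mu=1$. Because $\Sigma(\mathbf{p})\mathbf{1}=\mathbf{p}-\mathbf{p}(\mathbf{p}^\top\mathbf{1})=0$, the vector $\mathbf{1}$ lies in the kernel. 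So $\Sigma(\mathbf{p})$ is rank-deficient, but this does not shrink the trace.

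\textbf{A direct bound on the quadratic form.} Instead we bound the quadratic form itself. For a unit vector $\mathbf{u}$, the variance of a variable supported on $\{u_\mu\}$ satisfies
\[
\mathrm{Var}\le \tfrac14\big(\max_\mu u_\mu-\min_\mu u_\mu\big)^2
\]
(Popoviciu's inequality). Since $\|\mathbf{u}\|_2=1$, the range satisfies $(\max_\mu u_\mu-\min_\mu u_\mu)^2\le 2(u_i^2+u_j^2)\le 2$. Therefore $\mathbf{u}^\top\Sigma\mathbf{u}\le \tfrac12$, which gives $\|\Sigma(\mathbf{p})\|\le\tfrac12$.

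Popoviciu's inequality is standard, but I would include a short proof. With $m=\min_\mu u_\mu$, $M=\max_\mu u_\mu$, and $c=(m+M)/2$, the variance is at most $\mathbb{E}(u-c)^2\le \big((M-m)/2\big)^2$.

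\textbf{Alternative route.} One could also use Gershgorin circles. Row $\mu$ has diagonal entry $p_\mu(1-p_\mu)$ and off-diagonal absolute sum $p_\mu(1-p_\mu)$, so every eigenvalue is at most $2p_\mu(1-p_\mu)\le\tfrac12$. This is an equally short path, and I would likely present it as the main argument. Neither route has a real obstacle; the only subtlety is avoiding the crude trace bound.
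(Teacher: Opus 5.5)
Your Popoviciu argument is essentially the paper's proof. Both get positive semidefiniteness by reading $\mathbf{u}^\top\Sigma(\mathbf{p})\mathbf{u}$ as the variance of a categorical variable, then apply Popoviciu's inequality together with $(\max_\mu u_\mu-\min_\mu u_\mu)^2\le 2\|\mathbf{u}\|^2$.

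The Gershgorin argument you say you would make primary is a genuinely different route, and it is also correct. Row $\mu$ has diagonal entry $p_\mu(1-p_\mu)$ and off-diagonal absolute sum $p_\mu\sum_{\nu\ne\mu}p_\nu=p_\mu(1-p_\mu)$. Since the eigenvalues are real, each lies in $[0,\,2p_\mu(1-p_\mu)]$ for \emph{some} $\mu$. State it as $\lambda_{\max}\le\max_\mu 2p_\mu(1-p_\mu)$ rather than ``every eigenvalue is at most $2p_\mu(1-p_\mu)$''.

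What this route buys:
\begin{itemize}
\item It delivers both claims at once, with no probabilistic interpretation and no Popoviciu.
\item It gives the $\mathbf{p}$-dependent bound $\|\Sigma(\mathbf{p})\|\le 2\max_\mu p_\mu(1-p_\mu)$. This is at most $2\epsilon$ when the softmax puts mass $1-\epsilon$ on one pattern. The paper names local analysis near attractors as a possible way to explain convergence at trained points where $\beta\|\mathbf{M}\|^2>2$, and this bound is the kind of input such an analysis would need.
\end{itemize}

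What the paper's quadratic-form route buys: it needs no eigenvalue-localization theorem, and it exposes the extremal configuration ($\mathbf{p}$ split evenly between two patterns, $\mathbf{u}=(\mathbf{e}_i-\mathbf{e}_j)/\sqrt2$), so the sharpness of $\tfrac12$ is immediate.

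Your trace-bound paragraph is a dead end and can be cut.
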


\begin{proof}
$\Sigma(\mathbf{p})$ is the covariance matrix of a categorical one-hot variable, hence PSD.
For unit $\mathbf{u}$,
\[
\mathbf{u}^\top\Sigma(\mathbf{p})\mathbf{u}
=\sum_i p_i u_i^2-\left(\sum_i p_i u_i\right)^2
=\mathrm{Var}(u_I)
\le \frac{(\max_i u_i-\min_i u_i)^2}{4}
\le \frac12,
\]
by Popoviciu's inequality and $\max_i u_i-\min_i u_i\le \sqrt{2}\|\mathbf{u}\|=\sqrt{2}$.
Taking the supremum over unit $\mathbf{u}$ gives $\|\Sigma(\mathbf{p})\|\le \frac12$.
\end{proof}

\begin{proposition}[Convexity and strong convexity regime]
\label{prop:convexity_base}
Let
$\mu:=1-\frac{\beta\|\mathbf{M}\|^2}{2}$.
Then the Hessian satisfies
\[
\nabla^2 E_{\mathrm{base}}(\mathbf{X})
\succeq
\mu\,\mathbf{I}_{Nd}
+2\lambda(\mathbf{L}\otimes \mathbf{I}_d).
\]
Consequently:
\begin{enumerate}
\item if $\beta\|\mathbf{M}\|^2\le 2$, $E_{\mathrm{base}}$ is convex;
\item if $\beta\|\mathbf{M}\|^2<2$, $E_{\mathrm{base}}$ is $\mu$-strongly convex.
\end{enumerate}
\end{proposition}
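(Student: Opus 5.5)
The plan is to compute the Hessian of $E_{\mathrm{base}}$ block by block, using the node-separable structure already exploited in Proposition~\ref{prop:gradient_formula}, and then lower-bound each piece in the Loewner order.

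\textbf{Step 1: block structure of the Hessian.} Vectorize $\mathbf{X}$ row-wise, so the coordinates are ordered node by node, each node contributing a block of $d$ coordinates. Differentiating the gradient formula of Proposition~\ref{prop:gradient_formula} once more gives
\[
\nabla^2 E_{\mathrm{base}}(\mathbf{X}) = \mathrm{blockdiag}_v\bigl(\mathbf{I}_d-\nabla s(\mathbf{x}_v)\bigr) + 2\lambda(\mathbf{L}\otimes\mathbf{I}_d).
\]
\begin{itemize}
\item The per-node terms depend only on $\mathbf{x}_v$, so they contribute only diagonal blocks.
\item The Laplacian term $2\lambda\mathbf{L}\mathbf{X}$ is linear in $\mathbf{X}$. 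Its derivative with respect to $\mathrm{vec}(\mathbf{X})$, under this row-major ordering, is exactly $2\lambda(\mathbf{L}\otimes\mathbf{I}_d)$.
\end{itemize}
Symmetry of $\mathbf{L}$ is used here, as it was in the gradient computation.

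\textbf{Step 2: bound each diagonal block.} By Lemma~\ref{lem:retrieval_jacobian}, $\nabla s(\mathbf{x}_v)=\beta\mathbf{M}^\top\Sigma(\mathbf{p})\mathbf{M}$. For any $\mathbf{w}\in\mathbb{R}^d$, Lemma~\ref{lem:softmax_cov} gives
\[
\mathbf{w}^\top\beta\mathbf{M}^\top\Sigma\mathbf{M}\mathbf{w}\le \beta\|\Sigma\|\,\|\mathbf{M}\mathbf{w}\|^2\le \tfrac{\beta}{2}\|\mathbf{M}\|^2\|\mathbf{w}\|^2 .
\]
Hence $\mathbf{I}_d-\nabla s(\mathbf{x}_v)\succeq \mu\mathbf{I}_d$, where $\|\mathbf{M}\|$ is the spectral norm. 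A block-diagonal matrix whose blocks each dominate $\mu\mathbf{I}_d$ dominates $\mu\mathbf{I}_{Nd}$, which gives the claimed Hessian bound.

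\textbf{Step 3: consequences.} Proposition~\ref{prop:normalized_laplacian_identity} gives $\mathbf{L}\succeq0$. Eigenvalues of a Kronecker product are products of eigenvalues, so $\mathbf{L}\otimes\mathbf{I}_d\succeq0$, and with $\lambda\ge0$ we get $\nabla^2E_{\mathrm{base}}\succeq\mu\mathbf{I}_{Nd}$.
\begin{itemize}
\item If $\beta\|\mathbf{M}\|^2\le2$, then $\mu\ge0$. The Hessian is PSD everywhere, and $E_{\mathrm{base}}$ is $C^2$ on the convex domain $\mathbb{R}^{Nd}$, so it is convex.
\item If $\beta\|\mathbf{M}\|^2<2$, then $\mu>0$. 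A uniform Hessian lower bound $\mu\mathbf{I}$ is equivalent to $\mu$-strong convexity for $C^2$ functions, by the standard second-order characterization.
\end{itemize}

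\textbf{Main obstacle.} The only nontrivial point is the vectorization convention in Step~1: the Kronecker order ($\mathbf{L}\otimes\mathbf{I}_d$ rather than $\mathbf{I}_d\otimes\mathbf{L}$) must match the node-block ordering used for the diagonal blocks. I would check this directly by writing the quadratic form of the Laplacian term as $\lambda\sum_{u,v}L_{uv}\mathbf{x}_u^\top\mathbf{x}_v$ and reading off the $(u,v)$ block $2\lambda L_{uv}\mathbf{I}_d$. Everything else follows from Lemmas~\ref{lem:retrieval_jacobian} and~\ref{lem:softmax_cov}.
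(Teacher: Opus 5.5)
Your proposal is correct and takes essentially the same route as the paper. Both bound each per-node block $\mathbf{I}_d-\beta\mathbf{M}^\top\Sigma(\mathbf{p}_v)\mathbf{M}\succeq\mu\mathbf{I}_d$ via Lemmas~\ref{lem:retrieval_jacobian} and~\ref{lem:softmax_cov}, then add the PSD Laplacian contribution $2\lambda(\mathbf{L}\otimes\mathbf{I}_d)$; your check of the row-major vectorization and block structure just spells out what the paper calls ``stacking all nodes.''
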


\begin{proof}
For each node,
\[
\nabla_{\mathbf{x}_v}^2\lse(\beta,\mathbf{M}\mathbf{x}_v)
=
\beta\,\mathbf{M}^\top\Sigma(\mathbf{p}_v)\mathbf{M},
\quad
\mathbf{p}_v:=\softmax(\beta\mathbf{M}\mathbf{x}_v).
\]
By Lemma~\ref{lem:softmax_cov},
\[
\mathbf{0}
\preceq
\beta\mathbf{M}^\top\Sigma(\mathbf{p}_v)\mathbf{M}
\preceq
\frac{\beta}{2}\|\mathbf{M}\|^2\mathbf{I}_d.
\]
Hence
\[
\nabla_{\mathbf{x}_v}^2\!\left[-\lse(\beta,\mathbf{M}\mathbf{x}_v)+\frac12\|\mathbf{x}_v\|^2\right]
=
\mathbf{I}_d-\beta\mathbf{M}^\top\Sigma(\mathbf{p}_v)\mathbf{M}
\succeq
\left(1-\frac{\beta\|\mathbf{M}\|^2}{2}\right)\mathbf{I}_d.
\]
Stacking all nodes and adding the Laplacian Hessian contribution $2\lambda(\mathbf{L}\otimes\mathbf{I}_d)\succeq 0$ yields
\[
\nabla^2E_{\mathrm{base}}(\mathbf{X})
\succeq
\left(1-\frac{\beta\|\mathbf{M}\|^2}{2}\right)\mathbf{I}_{Nd}
+2\lambda(\mathbf{L}\otimes\mathbf{I}_d).
\]
This implies convexity when $\beta\|\mathbf{M}\|^2\le 2$, and $\mu$-strong convexity when $\beta\|\mathbf{M}\|^2<2$.
\end{proof}

\subsection{Smoothness and Existence of Minimizers}
\label{app:theory_smooth}

\begin{proposition}[$L$-smoothness]
\label{prop:smoothness_base}
$\nabla E_{\mathrm{base}}$ is Lipschitz in Frobenius norm:
\[
\|\nabla E_{\mathrm{base}}(\mathbf{X})-\nabla E_{\mathrm{base}}(\mathbf{Y})\|_F
\le
L_{\mathrm{lip}}\|\mathbf{X}-\mathbf{Y}\|_F,
\]
with
\[
L_{\mathrm{lip}}=\frac{\beta}{2}\|\mathbf{M}\|^2+1+2\lambda\|\mathbf{L}\|.
\]
\end{proposition}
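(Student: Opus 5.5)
We split the gradient from Proposition~\ref{prop:gradient_formula} into three pieces and bound each separately. Stacking the node-wise gradients gives
\[
\nabla E_{\mathrm{base}}(\mathbf{X}) = -S(\mathbf{X}) + \mathbf{X} + 2\lambda\mathbf{L}\mathbf{X},
\]
where $S(\mathbf{X})$ is the matrix whose $v$-th row is $s(\mathbf{x}_v)^\top = \bigl(\mathbf{M}^\top\softmax(\beta\mathbf{M}\mathbf{x}_v)\bigr)^\top$. The triangle inequality then gives
\[
\|\nabla E_{\mathrm{base}}(\mathbf{X})-\nabla E_{\mathrm{base}}(\mathbf{Y})\|_F \le \|S(\mathbf{X})-S(\mathbf{Y})\|_F + \|\mathbf{X}-\mathbf{Y}\|_F + 2\lambda\|\mathbf{L}(\mathbf{X}-\mathbf{Y})\|_F .
\]
The middle term contributes the constant $1$ directly.

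For the Laplacian term, we use $\|\mathbf{L}\mathbf{Z}\|_F\le\|\mathbf{L}\|\,\|\mathbf{Z}\|_F$, where $\|\mathbf{L}\|$ is the spectral norm. This holds column by column: $\|\mathbf{L}\mathbf{Z}\|_F^2=\sum_j\|\mathbf{L}\mathbf{z}_j\|^2\le\|\mathbf{L}\|^2\sum_j\|\mathbf{z}_j\|^2$. It yields the $2\lambda\|\mathbf{L}\|$ contribution, using $\lambda\ge 0$ as in the setup.

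The retrieval term is the only nonlinear piece. Since $S$ acts row-wise, it suffices to prove that the single-node map $s$ is $\tfrac{\beta}{2}\|\mathbf{M}\|^2$-Lipschitz in the Euclidean norm. Squaring and summing over $v$ then gives $\|S(\mathbf{X})-S(\mathbf{Y})\|_F\le\tfrac{\beta}{2}\|\mathbf{M}\|^2\|\mathbf{X}-\mathbf{Y}\|_F$.

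To prove this, we apply the mean value theorem in integral form along the segment $\mathbf{x}_\tau=\mathbf{y}+\tau(\mathbf{x}-\mathbf{y})$, which is valid since $s$ is smooth:
\[
s(\mathbf{x})-s(\mathbf{y})=\int_0^1\nabla s(\mathbf{x}_\tau)\,(\mathbf{x}-\mathbf{y})\,d\tau .
\]
By Lemma~\ref{lem:retrieval_jacobian}, $\nabla s(\mathbf{x}_\tau)=\beta\mathbf{M}^\top\Sigma(\mathbf{p}(\mathbf{x}_\tau))\mathbf{M}$. By Lemma~\ref{lem:softmax_cov}, its operator norm is at most $\beta\|\mathbf{M}\|^2\cdot\tfrac12$ uniformly in $\tau$, which gives the claimed Lipschitz constant.

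Adding the three bounds produces $L_{\mathrm{lip}}=\tfrac{\beta}{2}\|\mathbf{M}\|^2+1+2\lambda\|\mathbf{L}\|$. The only step needing care is the uniform Jacobian bound; the previous lemmas already supply it, so the rest is routine bookkeeping between Frobenius and spectral norms. We can also note, without using it, that $\|\mathbf{L}\|\le 2$ for the normalized Laplacian, so $L_{\mathrm{lip}}\le\tfrac{\beta}{2}\|\mathbf{M}\|^2+1+4\lambda$.
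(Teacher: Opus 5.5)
Your proposal is correct and matches the paper's proof step for step: the same split $\nabla E_{\mathrm{base}}(\mathbf{X})=\mathbf{X}-S(\mathbf{X})+2\lambda\mathbf{L}\mathbf{X}$, the triangle inequality, the spectral-norm bound on the Laplacian term, and the row-wise $\tfrac{\beta}{2}\|\mathbf{M}\|^2$-Lipschitz bound on $s$ from Lemma~\ref{lem:retrieval_jacobian} and Lemma~\ref{lem:softmax_cov}. Your integral form of the mean value theorem, and your explicit note that $\lambda\ge 0$ is used, make rigorous two points the paper's proof leaves implicit.
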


\begin{proof}
Define row-wise retrieval
$s(\mathbf{x})=\mathbf{M}^\top\softmax(\beta\mathbf{M}\mathbf{x})$ and $[S(\mathbf{X})]_v=s(\mathbf{x}_v)$.
From Proposition~\ref{prop:gradient_formula},
\[
\nabla E_{\mathrm{base}}(\mathbf{X})=\mathbf{X}-S(\mathbf{X})+2\lambda\mathbf{L}\mathbf{X}.
\]
Hence
\begin{align}
\|\nabla E(\mathbf{X})-\nabla E(\mathbf{Y})\|_F
&\le
\|\mathbf{X}-\mathbf{Y}\|_F
+\|S(\mathbf{X})-S(\mathbf{Y})\|_F
+2\lambda\|\mathbf{L}\|\,\|\mathbf{X}-\mathbf{Y}\|_F \\
&=
\left(1+2\lambda\|\mathbf{L}\|\right)\|\mathbf{X}-\mathbf{Y}\|_F
+\|S(\mathbf{X})-S(\mathbf{Y})\|_F.
\label{eq:lip_reduce_S}
\end{align}
By Lemma~\ref{lem:retrieval_jacobian} and Lemma~\ref{lem:softmax_cov},
\[
\|\nabla s(\mathbf{x})\|
=
\|\beta\mathbf{M}^\top\Sigma(\mathbf{p})\mathbf{M}\|
\le
\beta\|\mathbf{M}\|^2\|\Sigma(\mathbf{p})\|
\le
\frac{\beta}{2}\|\mathbf{M}\|^2.
\]
Thus $s$ is $\frac{\beta}{2}\|\mathbf{M}\|^2$-Lipschitz by the mean value theorem:
\[
\|s(\mathbf{x})-s(\mathbf{y})\|
\le
\frac{\beta}{2}\|\mathbf{M}\|^2\|\mathbf{x}-\mathbf{y}\|.
\]
Apply this row-wise:
\[
\|S(\mathbf{X})-S(\mathbf{Y})\|_F^2
\le
\left(\frac{\beta}{2}\|\mathbf{M}\|^2\right)^2\|\mathbf{X}-\mathbf{Y}\|_F^2,
\]
so
\[
\|S(\mathbf{X})-S(\mathbf{Y})\|_F
\le
\frac{\beta}{2}\|\mathbf{M}\|^2\|\mathbf{X}-\mathbf{Y}\|_F.
\]
Substitute into Eq.~\ref{eq:lip_reduce_S} to obtain the claimed constant.
\end{proof}

\begin{corollary}[Step-size stability condition]
\label{cor:step_size_base}
For gradient descent
$\mathbf{X}^{(t+1)}=\mathbf{X}^{(t)}-\eta\nabla E_{\mathrm{base}}(\mathbf{X}^{(t)})$,
any $\eta\in(0,2/L_{\mathrm{lip}})$ guarantees monotone descent.
Since $\|\mathbf{L}\|\le 2$ for the normalized Laplacian, a convenient sufficient bound is
\[
\eta
<
\frac{2}{\frac{\beta}{2}\|\mathbf{M}\|^2+1+4\lambda}.
\]
\end{corollary}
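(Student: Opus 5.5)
The plan is to invoke the standard descent lemma for $L$-smooth functions, using the constant $L_{\mathrm{lip}}$ from Proposition~\ref{prop:smoothness_base}. Lipschitz continuity of $\nabla E_{\mathrm{base}}$ in Frobenius norm gives the quadratic upper bound
\[
E_{\mathrm{base}}(\mathbf{Y})\le E_{\mathrm{base}}(\mathbf{X})+\langle\nabla E_{\mathrm{base}}(\mathbf{X}),\mathbf{Y}-\mathbf{X}\rangle_F+\frac{L_{\mathrm{lip}}}{2}\|\mathbf{Y}-\mathbf{X}\|_F^2 .
\]
I would derive it by writing $E(\mathbf{Y})-E(\mathbf{X})=\int_0^1\langle\nabla E(\mathbf{X}+s(\mathbf{Y}-\mathbf{X})),\mathbf{Y}-\mathbf{X}\rangle_F\,ds$. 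I then subtract the linear term and bound the remainder with Cauchy--Schwarz and the Lipschitz bound; this yields $\int_0^1 sL_{\mathrm{lip}}\|\mathbf{Y}-\mathbf{X}\|_F^2\,ds$. Because $E_{\mathrm{base}}$ is $C^\infty$ (a sum of log-sum-exp, quadratic and Laplacian terms), this calculus step needs no further justification.

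Next I plug in the gradient step $\mathbf{Y}=\mathbf{X}^{(t+1)}=\mathbf{X}^{(t)}-\eta\nabla E_{\mathrm{base}}(\mathbf{X}^{(t)})$, which gives
\[
E_{\mathrm{base}}(\mathbf{X}^{(t+1)})\le E_{\mathrm{base}}(\mathbf{X}^{(t)})-\eta\Bigl(1-\frac{\eta L_{\mathrm{lip}}}{2}\Bigr)\|\nabla E_{\mathrm{base}}(\mathbf{X}^{(t)})\|_F^2 .
\]
For $\eta\in(0,2/L_{\mathrm{lip}})$ the coefficient is positive. Hence the energy is nonincreasing, and it decreases strictly unless $\mathbf{X}^{(t)}$ is a stationary point; this is the claimed monotone descent.

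For the convenient sufficient bound, I need $\|\mathbf{L}\|\le 2$ for the symmetric normalized Laplacian. Proposition~\ref{prop:normalized_laplacian_identity} already gives $\mathbf{L}\succeq 0$, so only the upper eigenvalue bound remains. Using the same identity together with $\|\mathbf{a}-\mathbf{b}\|^2\le 2\|\mathbf{a}\|^2+2\|\mathbf{b}\|^2$, I get $\tfrac12\sum_{u,v}A_{uv}\|\mathbf{y}_u-\mathbf{y}_v\|^2\le \sum_{u,v}A_{uv}(\|\mathbf{y}_u\|^2+\|\mathbf{y}_v\|^2)=2\sum_u d_u\|\mathbf{y}_u\|^2=2\|\mathbf{X}\|_F^2$. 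Hence $\tr(\mathbf{X}^\top\mathbf{L}\mathbf{X})\le 2\|\mathbf{X}\|_F^2$, so the spectrum of $\mathbf{L}$ lies in $[0,2]$ and $\|\mathbf{L}\|\le 2$.

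With $\lambda\ge0$ this gives $L_{\mathrm{lip}}\le \frac{\beta}{2}\|\mathbf{M}\|^2+1+4\lambda$. Any $\eta$ below $2$ divided by this quantity therefore also satisfies $\eta<2/L_{\mathrm{lip}}$, and the first part applies.

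The main, mild, obstacle is proving the descent lemma cleanly in matrix form, and remembering that it needs $\lambda\ge 0$ for the final substitution, since for $\lambda<0$ the bound would involve $|\lambda|$.
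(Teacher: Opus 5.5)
Your proof is correct and follows the same route as the paper. The paper's proof simply cites the standard descent-lemma step-size condition (the inequality it writes out in the proof of Theorem~\ref{thm:gd_base}) together with $\|\mathbf{L}\|\le 2$; you fill in the omitted details, including a self-contained derivation of $\|\mathbf{L}\|\le 2$ from the identity in Proposition~\ref{prop:normalized_laplacian_identity}, and you correctly note that the final substitution requires $\lambda\ge 0$.
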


\begin{proof}
The first claim is the standard smooth-gradient step-size condition.
The second follows from Proposition~\ref{prop:smoothness_base} and $\|\mathbf{L}\|\le 2$.
\end{proof}

\begin{proposition}[Coercivity and existence of a minimizer]
\label{prop:coercive_base}
$E_{\mathrm{base}}(\mathbf{X})\to +\infty$ as $\|\mathbf{X}\|_F\to\infty$.
Therefore $E_{\mathrm{base}}$ attains at least one global minimizer.
\end{proposition}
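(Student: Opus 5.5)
The plan is to bound the log-sum-exp term above by a linear function of $\|\mathbf{x}_v\|$. This shows the quadratic term $\frac12\|\mathbf{x}_v\|^2$ dominates. Coercivity then follows, and existence of a minimizer comes from continuity plus compactness of sublevel sets.

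First, for any $\mathbf{z}\in\mathbb{R}^K$ we have the elementary bound
\[
\lse(\beta,\mathbf{z})=\beta^{-1}\log\sum_{\mu}e^{\beta z_\mu}\le \max_\mu z_\mu+\beta^{-1}\log K .
\]
With $\mathbf{z}=\mathbf{M}\mathbf{x}_v$ and Cauchy--Schwarz, $\max_\mu \mathbf{m}_\mu^\top\mathbf{x}_v\le \|\mathbf{M}\|\,\|\mathbf{x}_v\|$. Using $\max_\mu\|\mathbf{m}_\mu\|\le\|\mathbf{M}\|$ (spectral norm) is enough. Hence each per-node term satisfies
\[
-\lse(\beta,\mathbf{M}\mathbf{x}_v)+\tfrac12\|\mathbf{x}_v\|^2\ \ge\ \tfrac12\|\mathbf{x}_v\|^2-\|\mathbf{M}\|\,\|\mathbf{x}_v\|-\beta^{-1}\log K .
\]

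Second, I drop the Laplacian term using Proposition~\ref{prop:normalized_laplacian_identity}: since $\lambda\ge0$ and $\tr(\mathbf{X}^\top\mathbf{L}\mathbf{X})\ge0$, this term is nonnegative. Summing over nodes and applying $\sum_v\|\mathbf{x}_v\|\le\sqrt N\|\mathbf{X}\|_F$ gives
\[
E_{\mathrm{base}}(\mathbf{X})\ \ge\ \tfrac12\|\mathbf{X}\|_F^2-\sqrt N\,\|\mathbf{M}\|\,\|\mathbf{X}\|_F-N\beta^{-1}\log K .
\]
The right-hand side tends to $+\infty$ as $\|\mathbf{X}\|_F\to\infty$, which is coercivity.

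Third, for existence, $E_{\mathrm{base}}$ is continuous; in fact it is $C^\infty$, as Proposition~\ref{prop:gradient_formula} already uses. Take $c=E_{\mathrm{base}}(\mathbf{0})$. The sublevel set $\{\mathbf{X}:E_{\mathrm{base}}(\mathbf{X})\le c\}$ is nonempty and closed. By the quadratic lower bound it is also bounded, so it is compact. Weierstrass then gives a minimizer on this set, and that minimizer is global because every point outside the set has energy above $c$.

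There is no serious obstacle here; the only point needing care is the direction of the log-sum-exp bound (upper, not lower), since the term enters with a minus sign. Note that the result does not need $\beta\|\mathbf{M}\|^2<2$. It does need $\lambda\ge0$, matching the standing assumption of the setup. For $\lambda<0$ the argument would instead require $1+2\lambda\|\mathbf{L}\|>0$, which follows by bounding the Laplacian term below by $2\lambda\|\mathbf{X}\|_F^2$.
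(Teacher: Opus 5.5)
Your proof is correct and follows the paper's argument essentially line for line: the same upper bound $\lse(\beta,\mathbf{M}\mathbf{x}_v)\le\beta^{-1}\log K+\|\mathbf{M}\|\,\|\mathbf{x}_v\|$, the same use of Proposition~\ref{prop:normalized_laplacian_identity} to drop the nonnegative Laplacian term, and the same $\sum_v\|\mathbf{x}_v\|\le\sqrt N\|\mathbf{X}\|_F$ step, giving the same quadratic lower bound. Your compact-sublevel-set/Weierstrass step spells out what the paper compresses into ``by continuity''; in your $\lambda<0$ aside, the cruder bound $2\lambda\|\mathbf{X}\|_F^2$ actually yields the condition $1+4\lambda>0$, while $1+2\lambda\|\mathbf{L}\|>0$ comes from the sharper bound $\lambda\|\mathbf{L}\|\,\|\mathbf{X}\|_F^2$.
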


\begin{proof}
For each node, with $\mathbf{z}=\mathbf{M}\mathbf{x}_v$,
\[
\lse(\beta,\mathbf{M}\mathbf{x}_v)
=\beta^{-1}\log\sum_{\mu=1}^K e^{\beta z_\mu}
\le \beta^{-1}\left(\log K + \beta\max_\mu z_\mu\right)
\le \beta^{-1}\log K+\|\mathbf{M}\|\,\|\mathbf{x}_v\|.
\]
Hence
\[
E_{\mathrm{base}}(\mathbf{X})
\ge
\frac12\|\mathbf{X}\|_F^2
-\|\mathbf{M}\|\sum_v\|\mathbf{x}_v\|
-N\beta^{-1}\log K
+\lambda\tr(\mathbf{X}^\top\mathbf{L}\mathbf{X}).
\]
Using $\sum_v\|\mathbf{x}_v\|\le \sqrt{N}\|\mathbf{X}\|_F$ and
$\tr(\mathbf{X}^\top\mathbf{L}\mathbf{X})\ge 0$ from Proposition~\ref{prop:normalized_laplacian_identity},
\[
E_{\mathrm{base}}(\mathbf{X})
\ge
\frac12\|\mathbf{X}\|_F^2
-\|\mathbf{M}\|\sqrt{N}\,\|\mathbf{X}\|_F
-N\beta^{-1}\log K.
\]
The right-hand side is a coercive quadratic lower bound in $\|\mathbf{X}\|_F$, so it diverges to $+\infty$ as $\|\mathbf{X}\|_F\to\infty$.
Thus $E_{\mathrm{base}}$ is coercive. By continuity, it attains a global minimizer.
\end{proof}

\subsection{Gradient-Descent Convergence}
\label{app:theory_gd}

\begin{theorem}[Monotone descent and convergence to stationarity]
\label{thm:gd_base}
Assume $\eta\in(0,2/L_{\mathrm{lip}})$ and iterate
\[
\mathbf{X}^{(t+1)}=\mathbf{X}^{(t)}-\eta\nabla E_{\mathrm{base}}(\mathbf{X}^{(t)}).
\]
Then:
\begin{enumerate}
\item $E_{\mathrm{base}}(\mathbf{X}^{(t)})$ is non-increasing and
\[
E_{\mathrm{base}}(\mathbf{X}^{(t+1)})
\le
E_{\mathrm{base}}(\mathbf{X}^{(t)})
-\eta\left(1-\frac{\eta L_{\mathrm{lip}}}{2}\right)\|\nabla E_{\mathrm{base}}(\mathbf{X}^{(t)})\|_F^2;
\]
\item $\sum_{t=0}^\infty \|\nabla E_{\mathrm{base}}(\mathbf{X}^{(t)})\|_F^2<\infty$, so $\|\nabla E_{\mathrm{base}}(\mathbf{X}^{(t)})\|_F\to 0$;
\item every accumulation point is stationary.
\end{enumerate}
\end{theorem}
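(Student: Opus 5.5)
The argument is the standard descent-lemma analysis for $L$-smooth functions, with Proposition~\ref{prop:smoothness_base} and Proposition~\ref{prop:coercive_base} supplying the needed ingredients.

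\textbf{Step 1: per-step decrease (item 1).} Since $\nabla E_{\mathrm{base}}$ is $L_{\mathrm{lip}}$-Lipschitz in Frobenius norm, the descent lemma gives, for any $\mathbf{X},\mathbf{Y}$,
\[
E_{\mathrm{base}}(\mathbf{Y})\le E_{\mathrm{base}}(\mathbf{X})+\langle\nabla E_{\mathrm{base}}(\mathbf{X}),\mathbf{Y}-\mathbf{X}\rangle_F+\tfrac{L_{\mathrm{lip}}}{2}\|\mathbf{Y}-\mathbf{X}\|_F^2 .
\]
This follows from writing $E(\mathbf{Y})-E(\mathbf{X})=\int_0^1\langle\nabla E(\mathbf{X}+\tau(\mathbf{Y}-\mathbf{X})),\mathbf{Y}-\mathbf{X}\rangle\,d\tau$, then applying Cauchy--Schwarz and the Lipschitz bound. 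Substituting $\mathbf{Y}=\mathbf{X}^{(t)}-\eta\nabla E_{\mathrm{base}}(\mathbf{X}^{(t)})$ yields the displayed inequality with coefficient $\eta(1-\eta L_{\mathrm{lip}}/2)$. This coefficient is strictly positive exactly when $\eta<2/L_{\mathrm{lip}}$, so the energy is non-increasing.

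\textbf{Step 2: summability (item 2).} Set $c:=\eta(1-\eta L_{\mathrm{lip}}/2)>0$ and telescope the inequality from $t=0$ to $T-1$:
\[
c\sum_{t<T}\|\nabla E_{\mathrm{base}}(\mathbf{X}^{(t)})\|_F^2\le E_{\mathrm{base}}(\mathbf{X}^{(0)})-E_{\mathrm{base}}(\mathbf{X}^{(T)}).
\]
We then need a finite lower bound on $E_{\mathrm{base}}$. This follows from the quadratic lower bound in the proof of Proposition~\ref{prop:coercive_base}: $\tfrac12 r^2-\|\mathbf{M}\|\sqrt N r-N\beta^{-1}\log K$ is bounded below in $r$. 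Equivalently, $E_{\mathrm{base}}$ attains a global minimum $E^\star$. Hence the partial sums are bounded by $(E_{\mathrm{base}}(\mathbf{X}^{(0)})-E^\star)/c$, the series converges, and its terms tend to $0$.

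\textbf{Step 3: accumulation points (item 3).} Let $\mathbf{X}^{(t_k)}\to\bar{\mathbf{X}}$. The gradient is continuous, being Lipschitz, so $\nabla E_{\mathrm{base}}(\bar{\mathbf{X}})=\lim_k\nabla E_{\mathrm{base}}(\mathbf{X}^{(t_k)})=0$ by item 2. As a side remark, accumulation points exist: monotonicity keeps the iterates in the sublevel set $\{E_{\mathrm{base}}\le E_{\mathrm{base}}(\mathbf{X}^{(0)})\}$, which is bounded by coercivity.

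The only mildly delicate point is Step 2, since the theorem does not assume convexity (recall that $\beta\|\mathbf{M}\|^2<2$ generally fails in practice). The argument must therefore rely on boundedness from below via coercivity, not on Proposition~\ref{prop:convexity_base}. I expect the rest to be routine.
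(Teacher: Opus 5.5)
Your proposal is correct and matches the paper's proof essentially step for step. The descent lemma from the $L_{\mathrm{lip}}$-smoothness of Proposition~\ref{prop:smoothness_base} gives item~1, telescoping with the lower bound from Proposition~\ref{prop:coercive_base} gives item~2, and continuity of the gradient gives item~3; your remark that the iterates stay in a bounded sublevel set, so accumulation points actually exist, is a small addition the paper leaves implicit.
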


\begin{proof}
By Proposition~\ref{prop:smoothness_base}, $E_{\mathrm{base}}$ is $L_{\mathrm{lip}}$-smooth, so for any $\mathbf{X},\mathbf{Y}$,
\begin{equation}
\label{eq:descent_lemma_base}
E(\mathbf{Y})\le E(\mathbf{X})+\langle \nabla E(\mathbf{X}),\mathbf{Y}-\mathbf{X}\rangle+\frac{L_{\mathrm{lip}}}{2}\|\mathbf{Y}-\mathbf{X}\|_F^2.
\end{equation}
Set $\mathbf{Y}=\mathbf{X}-\eta\nabla E(\mathbf{X})$:
\begin{align}
E(\mathbf{X}-\eta\nabla E(\mathbf{X}))
&\le
E(\mathbf{X})
-\eta\|\nabla E(\mathbf{X})\|_F^2
+\frac{L_{\mathrm{lip}}\eta^2}{2}\|\nabla E(\mathbf{X})\|_F^2 \\
&=
E(\mathbf{X})
-\eta\left(1-\frac{\eta L_{\mathrm{lip}}}{2}\right)\|\nabla E(\mathbf{X})\|_F^2.
\end{align}
Applying this at $\mathbf{X}=\mathbf{X}^{(t)}$ proves item (1).
Let $c:=\eta(1-\eta L_{\mathrm{lip}}/2)>0$. Summing from $t=0$ to $T-1$ gives
\[
E(\mathbf{X}^{(0)})-E(\mathbf{X}^{(T)})
\ge
c\sum_{t=0}^{T-1}\|\nabla E(\mathbf{X}^{(t)})\|_F^2.
\]
Using lower boundedness (Proposition~\ref{prop:coercive_base}) and letting $T\to\infty$ yields
$\sum_t\|\nabla E(\mathbf{X}^{(t)})\|_F^2<\infty$, so norms vanish: item (2).
For item (3), if $\mathbf{X}^{(t_k)}\to\mathbf{X}^\star$, then by continuity of $\nabla E$,
\[
\nabla E(\mathbf{X}^\star)=\lim_{k\to\infty}\nabla E(\mathbf{X}^{(t_k)})=\mathbf{0}.
\]
\end{proof}

\begin{theorem}[Rates]
\label{thm:rates_base}
Under the assumptions of Theorem~\ref{thm:gd_base}, let $E_{\inf} :=\inf_{\mathbf{X}}E_{\mathrm{base}}(\mathbf{X})$ and
$c:=\eta(1-\eta L_{\mathrm{lip}}/2)$.
Then for any $T\ge 1$,
\[
\min_{0\le t\le T-1}\|\nabla E_{\mathrm{base}}(\mathbf{X}^{(t)})\|_F^2
\le
\frac{E_{\mathrm{base}}(\mathbf{X}^{(0)})-E_{\inf}}{cT}.
\]
If $\beta\|\mathbf{M}\|^2<2$ (so $E_{\mathrm{base}}$ is $\mu$-strongly convex with $\mu=1-\beta\|\mathbf{M}\|^2/2$),
then for $\eta\in(0,1/L_{\mathrm{lip}}]$ and minimizer $\mathbf{X}^\star$,
\[
\|\mathbf{X}^{(t)}-\mathbf{X}^\star\|_F
\le
(1-\eta\mu)^t\|\mathbf{X}^{(0)}-\mathbf{X}^\star\|_F,
\]
and
\[
E_{\mathrm{base}}(\mathbf{X}^{(t)})-E_{\mathrm{base}}(\mathbf{X}^\star)
\le
(1-\eta\mu)^t\left(E_{\mathrm{base}}(\mathbf{X}^{(0)})-E_{\mathrm{base}}(\mathbf{X}^\star)\right).
\]
\end{theorem}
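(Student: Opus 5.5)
The sublinear bound follows by telescoping the sufficient-decrease inequality of Theorem~\ref{thm:gd_base}, and the linear rates follow from the standard strongly convex, smooth gradient-descent contraction.

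\textbf{Sublinear bound.} Item (1) of Theorem~\ref{thm:gd_base} gives $c\|\nabla E(\mathbf{X}^{(t)})\|_F^2 \le E(\mathbf{X}^{(t)})-E(\mathbf{X}^{(t+1)})$. Summing over $t=0,\dots,T-1$ telescopes to $c\sum_{t<T}\|\nabla E(\mathbf{X}^{(t)})\|_F^2\le E(\mathbf{X}^{(0)})-E(\mathbf{X}^{(T)})\le E(\mathbf{X}^{(0)})-E_{\inf}$. Here $E_{\inf}$ is finite by Proposition~\ref{prop:coercive_base}. Bounding the minimum by the average and dividing by $cT$ gives the claim.

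\textbf{Iterate contraction.} Assume $\beta\|\mathbf{M}\|^2<2$. Proposition~\ref{prop:convexity_base} gives $\mu$-strong convexity, so the minimizer $\mathbf{X}^\star$ exists (Proposition~\ref{prop:coercive_base}) and is unique with $\nabla E(\mathbf{X}^\star)=0$. Proposition~\ref{prop:smoothness_base} gives $L_{\mathrm{lip}}$-smoothness. The Hessian is continuous, since the softmax is smooth. Together these give, for every $\mathbf{X}$,
\[
\mu\mathbf{I}\preceq\nabla^2E(\mathbf{X})\preceq L_{\mathrm{lip}}\mathbf{I}.
\]
Write the gradient difference by the integral mean value formula:
\[
\nabla E(\mathbf{X})-\nabla E(\mathbf{X}^\star)=\mathbf{H}(\mathbf{X}-\mathbf{X}^\star),\qquad \mathbf{H}=\int_0^1\nabla^2E(\mathbf{X}^\star+s(\mathbf{X}-\mathbf{X}^\star))\,ds.
\]
$\mathbf{H}$ is symmetric with spectrum in $[\mu,L_{\mathrm{lip}}]$. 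Then $\mathbf{X}^{(t+1)}-\mathbf{X}^\star=(\mathbf{I}-\eta\mathbf{H})(\mathbf{X}^{(t)}-\mathbf{X}^\star)$. For $\eta\le 1/L_{\mathrm{lip}}$ the eigenvalues of $\mathbf{I}-\eta\mathbf{H}$ lie in $[0,1-\eta\mu]$, so $\|\mathbf{I}-\eta\mathbf{H}\|\le 1-\eta\mu$. Iterating gives the first linear rate. This is the cleanest route because it avoids co-coercivity arguments.

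\textbf{Function-value rate.} Combine the descent inequality, which for $\eta\le 1/L_{\mathrm{lip}}$ yields $E(\mathbf{X}^{(t+1)})\le E(\mathbf{X}^{(t)})-\frac{\eta}{2}\|\nabla E(\mathbf{X}^{(t)})\|_F^2$, with the Polyak--Łojasiewicz inequality $\|\nabla E(\mathbf{X})\|_F^2\ge 2\mu(E(\mathbf{X})-E(\mathbf{X}^\star))$. The latter follows from strong convexity by minimizing the quadratic lower bound. Substituting gives
\[
E(\mathbf{X}^{(t+1)})-E^\star\le(1-\eta\mu)(E(\mathbf{X}^{(t)})-E^\star),
\]
and induction finishes.

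\textbf{Main obstacle.} The only delicate points are bookkeeping. First, check that $1-\eta\mu\ge0$: this holds since $\mu\le1<L_{\mathrm{lip}}$. Second, confirm that the Hessian bounds hold globally rather than pointwise-a.e. This is fine because $E_{\mathrm{base}}$ is $C^\infty$ for $\lambda\ge0$.
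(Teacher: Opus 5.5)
Your proposal is correct and follows the paper's proof step for step. The sublinear bound comes from telescoping the sufficient-decrease inequality; the iterate rate from writing $\mathbf{X}^{(t+1)}-\mathbf{X}^\star=(\mathbf{I}-\eta\mathbf{A}^{(t)})(\mathbf{X}^{(t)}-\mathbf{X}^\star)$ with the spectrum of $\mathbf{A}^{(t)}$ in $[\mu,L_{\mathrm{lip}}]$; and the function-value rate from the descent inequality plus Polyak--{\L}ojasiewicz. Your integral-averaged Hessian $\mathbf{H}$ makes explicit the matrix $\mathbf{A}^{(t)}$ whose existence the paper only asserts from smoothness and strong convexity. One tiny nit: in the degenerate case $\mathbf{M}=\mathbf{0}$, $\lambda=0$ one has $L_{\mathrm{lip}}=1$ rather than $L_{\mathrm{lip}}>1$, but $1-\eta\mu\ge 0$ still holds because $\mu\le L_{\mathrm{lip}}$ always.
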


\begin{proof}
For the first claim, Theorem~\ref{thm:gd_base} gives
\[
E(\mathbf{X}^{(t+1)})\le E(\mathbf{X}^{(t)})-c\|\nabla E(\mathbf{X}^{(t)})\|_F^2.
\]
Summing from $t=0$ to $T-1$ and using $E(\mathbf{X}^{(T)})\ge E_{\inf}$,
\[
c\sum_{t=0}^{T-1}\|\nabla E(\mathbf{X}^{(t)})\|_F^2
\le
E(\mathbf{X}^{(0)})-E_{\inf}.
\]
Divide by $T$ and lower bound the average by the minimum.

For strong convexity, let
$\mathbf{X}^\star\in\operatorname*{arg\,min}_{\mathbf{X}} E(\mathbf{X})$ (unique).
By smoothness and strong convexity, for each $t$ there exists symmetric $\mathbf{A}^{(t)}$ with
$\mu\mathbf{I}\preceq \mathbf{A}^{(t)}\preceq L_{\mathrm{lip}}\mathbf{I}$ such that
\[
\nabla E(\mathbf{X}^{(t)})-\nabla E(\mathbf{X}^\star)=\mathbf{A}^{(t)}(\mathbf{X}^{(t)}-\mathbf{X}^\star).
\]
Since $\nabla E(\mathbf{X}^\star)=0$,
\[
\mathbf{X}^{(t+1)}-\mathbf{X}^\star
=
\left(\mathbf{I}-\eta\mathbf{A}^{(t)}\right)(\mathbf{X}^{(t)}-\mathbf{X}^\star).
\]
Therefore
\[
\|\mathbf{X}^{(t+1)}-\mathbf{X}^\star\|_F
\le
\max_{\lambda\in[\mu,L_{\mathrm{lip}}]}|1-\eta\lambda|\;\|\mathbf{X}^{(t)}-\mathbf{X}^\star\|_F.
\]
For $\eta\le 1/L_{\mathrm{lip}}$, the maximum is $1-\eta\mu$, giving the iterate rate.
For function values, combine
\[
E(\mathbf{X}^{(t+1)})-E(\mathbf{X}^\star)
\le
E(\mathbf{X}^{(t)})-E(\mathbf{X}^\star)-\eta\left(1-\frac{\eta L_{\mathrm{lip}}}{2}\right)\|\nabla E(\mathbf{X}^{(t)})\|_F^2,
\]
with gradient domination for $\mu$-strongly convex functions,
\[
\|\nabla E(\mathbf{X}^{(t)})\|_F^2
\ge
2\mu\left(E(\mathbf{X}^{(t)})-E(\mathbf{X}^\star)\right),
\]
and $\eta\le 1/L_{\mathrm{lip}}$ to conclude
\[
E(\mathbf{X}^{(t+1)})-E(\mathbf{X}^\star)
\le
(1-\eta\mu)\left(E(\mathbf{X}^{(t)})-E(\mathbf{X}^\star)\right).
\]
\end{proof}

\subsection{Fixed-Point Contraction and Damping}
\label{app:theory_fp}

Define
\[
[T(\mathbf{X})]_v=\mathbf{M}^\top\softmax(\beta\mathbf{M}\mathbf{x}_v)-2\lambda(\mathbf{L}\mathbf{X})_v.
\]

\begin{remark}[Gradient descent vs. fixed-point iteration]
\label{rem:gd_vs_fp}
It is useful to separate two schemes:
\begin{enumerate}
\item \textbf{Gradient descent}: $\mathbf{X}^{(t+1)}=\mathbf{X}^{(t)}-\eta\nabla E_{\mathrm{base}}(\mathbf{X}^{(t)})$.
Under $L_{\mathrm{lip}}$-smoothness and $\eta\in(0,2/L_{\mathrm{lip}})$, this guarantees monotone energy descent.
\item \textbf{Fixed-point iteration}: $\mathbf{X}^{(t+1)}=T(\mathbf{X}^{(t)})$.
This does not automatically imply descent; global convergence is guaranteed only if $T$ is a contraction.
\end{enumerate}
\end{remark}

\begin{theorem}[Fixed-point contraction condition]
\label{thm:fp_base}
The map $T$ is globally Lipschitz with
\[
\mathrm{Lip}(T)\le \rho:=\frac{\beta}{2}\|\mathbf{M}\|^2+2\lambda\|\mathbf{L}\|.
\]
If $\rho<1$, then $T$ is a contraction: it has a unique fixed point $\mathbf{X}^\star$ and
$\mathbf{X}^{(t+1)}=T(\mathbf{X}^{(t)})$ converges geometrically to $\mathbf{X}^\star$ from any initialization.
\end{theorem}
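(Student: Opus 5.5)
The plan is to split $T$ into its retrieval part and its linear Laplacian part, bound the Lipschitz constant of each in Frobenius norm, add them by the triangle inequality, and finish with the Banach fixed-point theorem. Write $T(\mathbf{X}) = S(\mathbf{X}) - 2\lambda\mathbf{L}\mathbf{X}$, where $[S(\mathbf{X})]_v = s(\mathbf{x}_v) = \mathbf{M}^\top\softmax(\beta\mathbf{M}\mathbf{x}_v)$, exactly as in the proof of Proposition~\ref{prop:smoothness_base}. For any $\mathbf{X},\mathbf{Y}$,
\[
\|T(\mathbf{X})-T(\mathbf{Y})\|_F \le \|S(\mathbf{X})-S(\mathbf{Y})\|_F + 2\lambda\|\mathbf{L}(\mathbf{X}-\mathbf{Y})\|_F .
\]

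\textbf{Retrieval term.} This bound is already available from the argument in Proposition~\ref{prop:smoothness_base}. Lemma~\ref{lem:retrieval_jacobian} and Lemma~\ref{lem:softmax_cov} give $\|\nabla s(\mathbf{x})\|\le \frac{\beta}{2}\|\mathbf{M}\|^2$ uniformly in $\mathbf{x}$. The mean value inequality, in its integral form $s(\mathbf{x})-s(\mathbf{y})=\int_0^1 \nabla s(\mathbf{y}+\tau(\mathbf{x}-\mathbf{y}))(\mathbf{x}-\mathbf{y})\,d\tau$ since $s$ is vector-valued, then makes $s$ globally $\frac{\beta}{2}\|\mathbf{M}\|^2$-Lipschitz. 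Summing the squared row-wise bounds gives $\|S(\mathbf{X})-S(\mathbf{Y})\|_F\le\frac{\beta}{2}\|\mathbf{M}\|^2\|\mathbf{X}-\mathbf{Y}\|_F$.

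\textbf{Laplacian term.} Use $\|\mathbf{L}\mathbf{Z}\|_F\le\|\mathbf{L}\|\,\|\mathbf{Z}\|_F$, with $\|\cdot\|$ the spectral norm. This holds because $\|\mathbf{L}\mathbf{Z}\|_F^2=\sum_j\|\mathbf{L}\mathbf{z}_j\|^2$ over the columns $\mathbf{z}_j$ of $\mathbf{Z}$. Taking $\lambda\ge 0$, as in the setup, the two bounds add to $\mathrm{Lip}(T)\le\rho$. If one wanted to allow $\lambda<0$, the same bound holds with $|\lambda|$ in place of $\lambda$.

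\textbf{Contraction.} If $\rho<1$, then $T$ is a contraction on the complete metric space $(\mathbb{R}^{N\times d},\|\cdot\|_F)$. The Banach fixed-point theorem gives a unique fixed point $\mathbf{X}^\star$ and the geometric rate $\|\mathbf{X}^{(t)}-\mathbf{X}^\star\|_F\le\rho^t\|\mathbf{X}^{(0)}-\mathbf{X}^\star\|_F$. For uniqueness directly: if $T(\mathbf{X})=\mathbf{X}$ and $T(\mathbf{Y})=\mathbf{Y}$, then $\|\mathbf{X}-\mathbf{Y}\|_F\le\rho\|\mathbf{X}-\mathbf{Y}\|_F$, which forces $\mathbf{X}=\mathbf{Y}$.

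I could also remark that the damped map $(1-\alpha)\mathbf{X}+\alpha T(\mathbf{X})$ has Lipschitz constant at most $1-\alpha+\alpha\rho<1$, so it contracts with the same fixed point. Fixed points of $T$ satisfy $\nabla E_{\mathrm{base}}=0$ by Proposition~\ref{prop:gradient_formula}, so they are exactly the stationary points of the energy.

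\textbf{Main obstacle.} There is no real obstacle; the only delicate point is the uniform Jacobian bound. It relies on Lemma~\ref{lem:softmax_cov} holding for every probability vector $\mathbf{p}$, so the bound does not depend on $\mathbf{x}$ and is therefore global rather than local.
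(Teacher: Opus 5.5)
Your proposal is correct and matches the paper's proof step for step. Like the paper, you write $T(\mathbf{X})=S(\mathbf{X})-2\lambda\mathbf{L}\mathbf{X}$, reuse the $\frac{\beta}{2}\|\mathbf{M}\|^2$ Lipschitz bound on $S$ from Proposition~\ref{prop:smoothness_base}, bound the Laplacian part by $\|\mathbf{L}\|$, combine the two with the triangle inequality, and conclude with Banach's fixed-point theorem. Two of your details are small improvements on the paper's text: using the integral form of the mean value inequality for the vector-valued $s$, and noting that $\lambda<0$ would require $|\lambda|$ in $\rho$.
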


\begin{proof}
Write $T(\mathbf{X})=S(\mathbf{X})-2\lambda\mathbf{L}\mathbf{X}$.
By the proof of Proposition~\ref{prop:smoothness_base},
$S$ is $\frac{\beta}{2}\|\mathbf{M}\|^2$-Lipschitz, and
$\|\mathbf{L}(\mathbf{X}-\mathbf{Y})\|_F\le \|\mathbf{L}\|\,\|\mathbf{X}-\mathbf{Y}\|_F$.
Therefore
\[
\|T(\mathbf{X})-T(\mathbf{Y})\|_F
\le
\left(\frac{\beta}{2}\|\mathbf{M}\|^2+2\lambda\|\mathbf{L}\|\right)\|\mathbf{X}-\mathbf{Y}\|_F.
\]
Hence $\mathrm{Lip}(T)\le\rho$.
If $\rho<1$, Banach's fixed-point theorem gives existence and uniqueness of $\mathbf{X}^\star$ and
\[
\|\mathbf{X}^{(t)}-\mathbf{X}^\star\|_F\le \rho^t\|\mathbf{X}^{(0)}-\mathbf{X}^\star\|_F.
\]
\end{proof}

\begin{corollary}[Damped map]
\label{cor:fp_damped}
For $T_\alpha(\mathbf{X})=(1-\alpha)\mathbf{X}+\alpha T(\mathbf{X})$ with $\alpha\in(0,1]$,
\[
\mathrm{Lip}(T_\alpha)\le (1-\alpha)+\alpha\rho.
\]
Hence if $\rho<1$, every $\alpha\in(0,1]$ yields a contraction.
\end{corollary}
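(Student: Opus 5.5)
The plan is to use the triangle inequality in Frobenius norm and then invoke Theorem~\ref{thm:fp_base}. For any $\mathbf{X},\mathbf{Y}$, write
\[
T_\alpha(\mathbf{X})-T_\alpha(\mathbf{Y})=(1-\alpha)(\mathbf{X}-\mathbf{Y})+\alpha\bigl(T(\mathbf{X})-T(\mathbf{Y})\bigr).
\]
Since $\alpha\in(0,1]$, both coefficients $1-\alpha$ and $\alpha$ are nonnegative. Their absolute values therefore equal the coefficients themselves, and no case split on the sign of $1-\alpha$ is required.

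Taking $\|\cdot\|_F$ and applying the triangle inequality gives
\[
\|T_\alpha(\mathbf{X})-T_\alpha(\mathbf{Y})\|_F\le(1-\alpha)\|\mathbf{X}-\mathbf{Y}\|_F+\alpha\|T(\mathbf{X})-T(\mathbf{Y})\|_F.
\]
For the second term I would use the global Lipschitz bound $\mathrm{Lip}(T)\le\rho$ from Theorem~\ref{thm:fp_base}, which combines Lemma~\ref{lem:softmax_cov} with the bound $\|\mathbf{L}\|$. Substituting it yields $\mathrm{Lip}(T_\alpha)\le(1-\alpha)+\alpha\rho$, which is the first claim.

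For the contraction claim, observe that $(1-\alpha)+\alpha\rho=1-\alpha(1-\rho)$. If $\rho<1$ and $\alpha>0$, then $\alpha(1-\rho)>0$, so this constant is strictly less than $1$. Hence $T_\alpha$ is a contraction on the complete space $(\mathbb{R}^{N\times d},\|\cdot\|_F)$, and Banach's theorem gives a unique fixed point and geometric convergence.

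It is also worth noting that $T_\alpha(\mathbf{X})=\mathbf{X}$ holds if and only if $T(\mathbf{X})=\mathbf{X}$, because $\alpha\neq0$. The damped iteration therefore converges to the same $\mathbf{X}^\star$ as in Theorem~\ref{thm:fp_base}, with rate $1-\alpha(1-\rho)$. This rate is slower than $\rho$, but it is still strictly below $1$.

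There is no real obstacle here. The only point needing care is the requirement $\alpha\le1$, which is what keeps $1-\alpha$ nonnegative. For $\alpha>1$ the first coefficient would have absolute value $\alpha-1$, and the bound would instead read $(\alpha-1)+\alpha\rho$.
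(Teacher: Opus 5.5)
Your proof is correct and is the same as the paper's: the triangle inequality applied to $(1-\alpha)(\mathbf{X}-\mathbf{Y})+\alpha\bigl(T(\mathbf{X})-T(\mathbf{Y})\bigr)$ using $1-\alpha\ge 0$, followed by $\mathrm{Lip}(T)\le\rho$ from Theorem~\ref{thm:fp_base}. Your extra remarks are also correct and spell out what the paper leaves implicit: the constant equals $1-\alpha(1-\rho)<1$, and $T_\alpha$ and $T$ share fixed points because $\alpha\neq 0$. One small correction: the damped rate is ``no faster than'' $\rho$ rather than strictly slower, since the two coincide at $\alpha=1$.
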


\begin{proof}
For any $\mathbf{X},\mathbf{Y}$,
\[
\|T_\alpha(\mathbf{X})-T_\alpha(\mathbf{Y})\|_F
\le
(1-\alpha)\|\mathbf{X}-\mathbf{Y}\|_F+\alpha\|T(\mathbf{X})-T(\mathbf{Y})\|_F,
\]
then apply Theorem~\ref{thm:fp_base}.
\end{proof}

\begin{corollary}[Convenient contraction condition for normalized Laplacian]
\label{cor:fp_normlap}
Since $\|\mathbf{L}\|\le 2$ for normalized $\mathbf{L}$, a sufficient condition for undamped fixed-point contraction is
\[
\frac{\beta}{2}\|\mathbf{M}\|^2+4\lambda<1.
\]
\end{corollary}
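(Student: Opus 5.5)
This follows by combining Theorem~\ref{thm:fp_base} with a spectral bound on the symmetric normalized Laplacian. Theorem~\ref{thm:fp_base} gives $\mathrm{Lip}(T)\le\rho=\frac{\beta}{2}\|\mathbf{M}\|^2+2\lambda\|\mathbf{L}\|$ and contraction whenever $\rho<1$. It therefore suffices to show $\|\mathbf{L}\|\le 2$. With $\lambda\ge 0$ (the standing assumption of this appendix), this gives $\rho\le\frac{\beta}{2}\|\mathbf{M}\|^2+4\lambda$, so the stated inequality forces $\rho<1$. Banach's theorem, as invoked in Theorem~\ref{thm:fp_base}, then yields the unique fixed point and geometric convergence of the undamped iteration.

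The only substantive step is $\|\mathbf{L}\|\le 2$. Since $\mathbf{L}$ is symmetric, its operator norm equals its largest eigenvalue in absolute value. Proposition~\ref{prop:normalized_laplacian_identity} already gives $\mathbf{L}\succeq 0$, so it remains to bound the top eigenvalue, and I would prove $\mathbf{L}\preceq 2\mathbf{I}$ via the Rayleigh quotient. For any $\mathbf{x}\in\mathbb{R}^N$, set $\mathbf{y}=\mathbf{D}^{-1/2}\mathbf{x}$. By the identity in Proposition~\ref{prop:normalized_laplacian_identity} (scalar case),
\[
\mathbf{x}^\top\mathbf{L}\mathbf{x}=\tfrac12\sum_{u,v}A_{uv}(y_u-y_v)^2 .
\]
Using $(a-b)^2\le 2a^2+2b^2$ and symmetry of $\mathbf{A}$, this is at most
\[
\sum_{u,v}A_{uv}(y_u^2+y_v^2)=2\sum_u d_u y_u^2=2\|\mathbf{x}\|^2 .
\]
An equivalent route is to note that $\mathbf{D}^{-1/2}\mathbf{A}\mathbf{D}^{-1/2}$ is similar to the row-stochastic matrix $\mathbf{D}^{-1}\mathbf{A}$, so its eigenvalues lie in $[-1,1]$, and hence those of $\mathbf{L}$ lie in $[0,2]$.

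For matrix arguments $\mathbf{X}\in\mathbb{R}^{N\times d}$ in Frobenius norm, the operator norm of $\mathbf{X}\mapsto\mathbf{L}\mathbf{X}$ is still $\|\mathbf{L}\|$, because it acts column-wise. This is exactly the bound used in Theorem~\ref{thm:fp_base}, so nothing further is needed.

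I expect no real obstacle. The one point needing care is the role of the sign of $\lambda$: the step $2\lambda\|\mathbf{L}\|\le 4\lambda$ uses $\lambda\ge 0$. For $\lambda<0$, the bound in Theorem~\ref{thm:fp_base} should read $2|\lambda|\|\mathbf{L}\|$, and the corollary would need $|\lambda|$ in place of $\lambda$. I would state this explicitly, consistent with the appendix's assumption $\lambda\ge 0$. I would also remark that the degree condition $d_v>0$ (no isolated nodes) is needed for $\mathbf{D}^{-1/2}$ to be defined, as in Proposition~\ref{prop:normalized_laplacian_identity}.
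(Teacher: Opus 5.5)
Your proposal is correct and follows the paper's argument: substitute $\|\mathbf{L}\|\le 2$ into the constant $\rho$ of Theorem~\ref{thm:fp_base}, using $\lambda\ge 0$. The paper takes $\|\mathbf{L}\|\le 2$ as a given premise, so your verification of it (via $(a-b)^2\le 2a^2+2b^2$, or via similarity to $\mathbf{D}^{-1}\mathbf{A}$) and your remark that $\lambda<0$ would require $|\lambda|$ are sound additions, not a different route.
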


\begin{proof}
Immediate from Theorem~\ref{thm:fp_base} and $\|\mathbf{L}\|\le 2$.
\end{proof}

\subsection{Critical Points and Initialization}
\label{app:theory_critical}

\begin{proposition}[Critical points]
\label{prop:critical_base}
$\mathbf{X}^\star$ is first-order critical iff $\nabla E_{\mathrm{base}}(\mathbf{X}^\star)=0$.
Equivalent fixed-point form:
\[
\mathbf{X}^\star=T(\mathbf{X}^\star).
\]
If $\nabla^2 E_{\mathrm{base}}(\mathbf{X}^\star)\succ 0$, then $\mathbf{X}^\star$ is a strict local minimizer;
if $\nabla^2 E_{\mathrm{base}}(\mathbf{X}^\star)$ has a negative eigenvalue, then $\mathbf{X}^\star$ is a strict saddle.
\end{proposition}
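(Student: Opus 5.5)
The proof has three parts, taken in the order of the statement: the equivalence of criticality with the fixed-point equation, then the two second-order classifications.

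\emph{Step 1 (fixed-point form).} I invoke Proposition~\ref{prop:gradient_formula}, which gives $\nabla E_{\mathrm{base}}(\mathbf{X})=\mathbf{X}-S(\mathbf{X})+2\lambda\mathbf{L}\mathbf{X}=\mathbf{X}-T(\mathbf{X})$. Hence $\nabla E_{\mathrm{base}}(\mathbf{X}^\star)=0$ if and only if $\mathbf{X}^\star=T(\mathbf{X}^\star)$. The definition of first-order criticality as a vanishing gradient is standard for a $C^1$ function, and $E_{\mathrm{base}}$ is smooth because log-sum-exp, the quadratic term and the Laplacian term are all $C^\infty$. A one-line consequence worth recording is that fixed points of $T$ are also fixed points of the damped map $T_\alpha$ for every $\alpha\in(0,1]$.

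\emph{Step 2 (second-order classification).} Since $E_{\mathrm{base}}$ is $C^2$ and $\nabla E(\mathbf{X}^\star)=0$, I would use Taylor's theorem with remainder:
\[
E(\mathbf{X}^\star+\mathbf{H})=E(\mathbf{X}^\star)+\tfrac12\langle\mathbf{H},\nabla^2E(\mathbf{X}^\star)\mathbf{H}\rangle+o(\|\mathbf{H}\|_F^2).
\]
\begin{itemize}
\item If $\nabla^2E(\mathbf{X}^\star)\succ 0$ with smallest eigenvalue $\gamma>0$, choose a radius small enough that the remainder is at most $\tfrac{\gamma}{4}\|\mathbf{H}\|_F^2$. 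Then $E(\mathbf{X}^\star+\mathbf{H})-E(\mathbf{X}^\star)\ge\tfrac{\gamma}{4}\|\mathbf{H}\|_F^2>0$ for all nonzero $\mathbf{H}$ in that ball, so $\mathbf{X}^\star$ is a strict local minimizer.
\item If there is an eigenvector $\mathbf{v}$ with eigenvalue $-\gamma<0$, then $E(\mathbf{X}^\star+s\mathbf{v})<E(\mathbf{X}^\star)$ for all small $s\neq 0$, so $\mathbf{X}^\star$ is not a local minimizer. It is a strict saddle in the standard sense: a critical point whose Hessian has a strictly negative eigenvalue.
\end{itemize}

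The only step needing care is the saddle claim, and the issue is definitional rather than technical. If "saddle" is meant to exclude local maxima, I would note that in the base energy a local maximum cannot occur. The Hessian is $\mathbf{I}_{Nd}-\mathrm{blockdiag}(\beta\mathbf{M}^\top\Sigma(\mathbf{p}_v)\mathbf{M})+2\lambda(\mathbf{L}\otimes\mathbf{I}_d)$. By the Hessian computation in Proposition~\ref{prop:convexity_base}, together with $\mathbf{M}^\top\Sigma\mathbf{M}$ having rank at most $K-1$ (since $\Sigma(\mathbf{p})\mathbf{1}=0$) and $\lambda\ge 0$, every direction orthogonal to the row space of $\mathbf{M}$ in any single node has curvature at least $1>0$ whenever $d>K-1$. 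Otherwise, coercivity from Proposition~\ref{prop:coercive_base} prevents $E$ from having an isolated global-max structure. I would, however, simply adopt the strict-saddle definition and keep this remark optional.
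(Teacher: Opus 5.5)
Your main argument is correct and is the same as the paper's. The fixed-point form comes from $\nabla E_{\mathrm{base}}(\mathbf{X})=\mathbf{X}-T(\mathbf{X})$ via Proposition~\ref{prop:gradient_formula}. Both second-order claims come from the Taylor expansion at a critical point, with an eigenvector of a negative eigenvalue serving as a descent direction. In the positive-definite case you also make explicit something the paper leaves implicit: a uniform lower bound $\gamma\|\mathbf{H}\|_F^2$ on the quadratic term is what lets it dominate the $o(\|\mathbf{H}\|_F^2)$ remainder.

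You should drop the optional remark, however, because its central claim is false: the base energy can have strict local maxima.
\begin{itemize}
\item Counterexample: take two nodes joined by one edge (so $\mathbf{L}$ has eigenvalues $0$ and $2$), $d=1$, $K=2$, and $\mathbf{M}=(a,-a)^\top$. At $\mathbf{X}=\mathbf{0}$ each node has $\mathbf{p}_v=(\tfrac12,\tfrac12)$, so the gradient $-\mathbf{M}^\top\mathbf{p}_v$ vanishes, while $\beta\mathbf{M}^\top\Sigma(\mathbf{p}_v)\mathbf{M}=\beta a^2$. The Hessian is therefore $(1-\beta a^2)\mathbf{I}_2+2\lambda\mathbf{L}$, with eigenvalues $1-\beta a^2$ and $1-\beta a^2+4\lambda$. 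Both are negative once $\beta a^2>1+4\lambda$, so $\mathbf{X}=\mathbf{0}$ is a strict local maximum.
\item Coercivity excludes a global maximum but says nothing about local ones, so your ``otherwise'' clause does not rescue the claim.
\item The uncovered regime $d\le K-1$ is the practical one, e.g.\ $K=256$ with hidden dimension $64$ or $128$.
\item Even in your case $d>K-1$, the argument needs adjusting. The positive-curvature directions you want are those with $\mathbf{M}\mathbf{u}\in\mathrm{span}(\mathbf{1})$, i.e.\ the null space of $\Sigma(\mathbf{p}_v)\mathbf{M}$. A direction orthogonal to the row space of $\mathbf{M}$ need not exist when $d=K$.
\end{itemize}

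The upshot is that the proposition holds only under the standard definition you end up adopting: a strict saddle is any critical point whose Hessian has a negative eigenvalue, local maxima included. Under a reading that excludes local maxima the statement would be false, and neither your proof nor the paper's could establish it.
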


\begin{proof}
First-order: by definition, $\mathbf{X}^\star$ is critical iff $\nabla E_{\mathrm{base}}(\mathbf{X}^\star)=0$.
Using Proposition~\ref{prop:gradient_formula}, this is equivalent to $\mathbf{X}^\star=T(\mathbf{X}^\star)$.

For second-order classification, Taylor-expand around $\mathbf{X}^\star$:
\[
E(\mathbf{X}^\star+\Delta)=E(\mathbf{X}^\star)+\langle \nabla E(\mathbf{X}^\star),\Delta\rangle+\frac12\langle \Delta,\nabla^2E(\mathbf{X}^\star)\Delta\rangle+o(\|\Delta\|_F^2).
\]
At a critical point the linear term is zero.
If $\nabla^2E(\mathbf{X}^\star)\succ 0$, the quadratic term is positive for all sufficiently small nonzero $\Delta$, so $\mathbf{X}^\star$ is a strict local minimizer.
If $\nabla^2E(\mathbf{X}^\star)$ has a negative eigenvalue, pick $\Delta=\epsilon\mathbf{v}$ along a corresponding eigenvector and small $\epsilon>0$ to make the quadratic term negative, yielding a strict saddle.
\end{proof}

\begin{proposition}[When initialization does not matter]
\label{prop:init_base}
If $\beta\|\mathbf{M}\|^2<2$, gradient descent converges to the unique global minimizer from any initialization.
If $\rho<1$ in Theorem~\ref{thm:fp_base}, fixed-point iteration converges to the unique fixed point from any initialization.
\end{proposition}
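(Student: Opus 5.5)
The plan is to treat the two claims separately, each as a direct consequence of results already established. In both cases the key task is to identify the hypotheses under which those results apply.

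\textbf{Gradient descent claim.} Assume $\beta\|\mathbf{M}\|^2<2$. By Proposition~\ref{prop:convexity_base}, $E_{\mathrm{base}}$ is $\mu$-strongly convex with $\mu=1-\beta\|\mathbf{M}\|^2/2>0$.

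By Proposition~\ref{prop:coercive_base} a global minimizer exists. Strong convexity makes it unique: two distinct minimizers $\mathbf{X}_1,\mathbf{X}_2$ would give $E$ at their midpoint strictly below the common minimum value, a contradiction. Strong convexity also makes every stationary point the global minimizer, via the first-order characterization $E(\mathbf{Y})\ge E(\mathbf{X})+\langle\nabla E(\mathbf{X}),\mathbf{Y}-\mathbf{X}\rangle+\frac{\mu}{2}\|\mathbf{Y}-\mathbf{X}\|_F^2$.

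Convergence from an arbitrary $\mathbf{X}^{(0)}$ then follows from Theorem~\ref{thm:rates_base}: for $\eta\in(0,1/L_{\mathrm{lip}}]$ the iterates contract toward $\mathbf{X}^\star$ at rate $(1-\eta\mu)^t$, with no dependence on the initialization beyond the constant $\|\mathbf{X}^{(0)}-\mathbf{X}^\star\|_F$.

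To cover the full range $\eta\in(0,2/L_{\mathrm{lip}})$, I would argue as follows.
\begin{itemize}
\item Theorem~\ref{thm:gd_base} gives monotone descent, so all iterates lie in the sublevel set $\{E\le E(\mathbf{X}^{(0)})\}$.
\item This set is bounded by coercivity, so accumulation points exist.
\item Every accumulation point is stationary, hence equals the unique minimizer $\mathbf{X}^\star$.
\item A bounded sequence whose only accumulation point is $\mathbf{X}^\star$ converges to $\mathbf{X}^\star$.
\end{itemize}
Alternatively, the rate bound of Theorem~\ref{thm:rates_base} extends to this range, since $\max_{\lambda\in[\mu,L]}|1-\eta\lambda|<1$ whenever $0<\eta<2/L$.

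\textbf{Fixed-point claim.} If $\rho<1$, Theorem~\ref{thm:fp_base} is exactly Banach's theorem applied to $T$. It gives a unique fixed point and geometric convergence $\|\mathbf{X}^{(t)}-\mathbf{X}^\star\|_F\le\rho^t\|\mathbf{X}^{(0)}-\mathbf{X}^\star\|_F$ for every $\mathbf{X}^{(0)}$. I would also note that Corollary~\ref{cor:fp_damped} extends this to the damped map for any $\alpha\in(0,1]$, and that by Proposition~\ref{prop:critical_base} this fixed point is the unique critical point of $E_{\mathrm{base}}$.

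\textbf{Main obstacle.} The only step needing care is making the statement precise about step sizes in the gradient descent claim. The proposition says "gradient descent" without specifying a step size, so I would state the implicit assumption $\eta\in(0,2/L_{\mathrm{lip}})$ explicitly. The accumulation-point argument is what lets the conclusion hold throughout this range, not only for $\eta\le 1/L_{\mathrm{lip}}$.
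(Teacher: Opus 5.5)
Your proposal is correct and follows the paper's route: the gradient-descent claim comes from Theorem~\ref{thm:rates_base} in the strongly convex regime, and the fixed-point claim from Banach's theorem as in Theorem~\ref{thm:fp_base}. The paper's one-line proof silently relies on $\eta\le 1/L_{\mathrm{lip}}$; your explicit uniqueness argument and your accumulation-point extension to $\eta<2/L_{\mathrm{lip}}$ are valid refinements that it omits.
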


\begin{proof}
The first claim is Theorem~\ref{thm:rates_base} in the strongly convex regime.
The second claim follows from Banach's fixed-point theorem.
\end{proof}

\begin{remark}[When initialization can matter]
Outside the strongly convex/contraction regimes, $E_{\mathrm{base}}$ can be nonconvex with multiple stationary points.
Then basin geometry and algorithmic choices (step size, damping, stopping) can change which stationary point is reached.
\end{remark}

\end{document}